\newtheorem{mycaption}{Figure}
\newtheorem{lemma}{Lemma}
\newtheorem{theorem}{Theorem}
\newtheorem{remark}{Remark}
\newtheorem{proof}{Proof}
\newcommand*{\qed}{\hfill\ensuremath{\square}}%
\newcommand{\revise}[1]{\textcolor{black}{{}#1}}
\newcommand{\reviseb}[1]{\textcolor{black}{{}#1}}
\begin{document}

	\title{\bf Adaptive Iterative Hessian Sketch via $A$-Optimal Subsampling}
	
	\author{Aijun Zhang$^*$, Hengtao Zhang and Guosheng Yin\\
		{\normalsize  Department of Statistics and Actuarial Science, The University of Hong Kong}\\
		{\normalsize Pokfulam Road, Hong Kong}}
	\date{}

\maketitle

\begin{abstract}
Iterative Hessian sketch (IHS) is an effective sketching method for modeling large-scale data. It was originally proposed by  Pilanci and Wainwright (2016; JMLR) based on randomized sketching matrices. However, it is computationally intensive due to the iterative sketch process. In this paper, we analyze the IHS algorithm under the unconstrained least squares problem setting, then propose a deterministic approach for improving IHS via $A$-optimal subsampling. Our contributions are three-fold: (1) a good initial estimator based on the $A$-optimal design is suggested; (2) a novel ridged preconditioner is developed for repeated sketching; and (3) an exact line search method is proposed for determining the optimal step length adaptively. Extensive experimental results demonstrate that our proposed $A$-optimal IHS algorithm outperforms the existing accelerated IHS methods.
\vskip 6.5pt \noindent {\bf Keywords}: 
Hessian sketch, Subsampling, Optimal design, Preconditioner, Exact line search, First-order method.
\end{abstract}

\section{Introduction}
\label{intro}

Consider the linear model $\bm{y}=\bm{X}\bm{\beta}+\bm{\varepsilon}$ with the response vector $\bm{y}\in\mathbb{R}^n$,  the design matrix $\bm{X} \in \mathbb{R}^{n\times d}$  and the noise term \revise{satisfying $\mathbb{E}(\bm{\varepsilon})=\bm{0}$ and $\mathbb{V}(\bm{\varepsilon})=\sigma^2\bm{I}_n$}. The unknown parameter $\bm\beta\in \mathbb{R}^d$ can be efficiently estimated by the method of least squares:
\begin{eqnarray}\label{LS}
\hat{\bm{\beta}}^{\rm LS} & = & \arg \min\limits_{\bm{\beta}} \frac{1}{2}\|\bm{X\beta}-\bm{y}\|_2^2\nonumber\\
&=& (\bm{X}^T\bm{X})^{-1}\bm{X}^T\bm{y},
\end{eqnarray}
which is of computational complexity $O(nd^2 + d^3)$. \revise{For the massive data with $n\gg d$, the least squares method would become computationally expensive, and it may  exceed the computing capacity.}

Faster least squares approximation can be achieved by the randomized sketch proposed by \cite{drineas2011faster}. It relies on a proper random matrix $\bm{S}\in\mathbb{R}^{m\times n}$ with $m\ll n$. The ways to generate random matrix $\bm{S}$ are divided into three main categories: subsampling, random projection and their hybrid.  A widely known subsampling method is \revise{based on statistical leverage (LEV) scores \citep{drineas2006sampling,mahoney2011randomized,drineas2012fast,ma2015statistical}. } 
The random projection approach includes the subsampled randomized Hadamard transformation (SRHT)  \citep{drineas2011faster,boutsidis2013improved} and the Clarkson--Woodruff sketch \citep{clarkson2013low}. The hybrid approach is to combine subsampling and random projection,  see e.g. \citet{mcwilliams2014fast}.  

When a sketching matrix $\bm{S}$ is fixed, there exist different sketching schemes, including the classical sketch (CS), the Hessian sketch (HS) and the iterative Hessian sketch (IHS). The widely adopted CS  uses the  sketched data pair $(\bm{SX}, \bm{Sy})$ for approximating   $\hat{\bm{\beta}}^{\rm LS}$ by 
\begin{eqnarray*}
\hat{\bm{\beta}}^{\rm CS} & =& \arg \min\limits_{\bm{\beta}} \frac{1}{2}\|\bm{SX\beta}-\bm{Sy}\|_2^2\nonumber\\
&=&(\bm{X}^T\bm{S}^T\bm{S}\bm{X})^{-1}\bm{X}^T\bm{S}^T\bm{S}\bm{y}.
\end{eqnarray*}
\citet{pilanci2016iterative}  showed that $\hat{\bm{\beta}}^{\rm CS}$ is suboptimal in the sense that it has a substantially larger error than $\hat{\bm{\beta}}^{\rm LS}$ with respect to the ground truth $\bm{\beta}^{*}$. 
They introduced the HS estimator based on the partially sketched data $(\bm{SX}, \bm{y})$,
\begin{eqnarray*}
\hat{\bm{\beta}}^{\rm HS} &=& \arg \min\limits_{\bm{\beta}} \frac{1}{2}\|\bm{SX\beta}\|_2^2-\bm{\beta}^T\bm{X}^T\bm{y}\nonumber\\
&=&(\bm{X}^T\bm{S}^T\bm{S}\bm{X})^{-1}\bm{X}^T\bm{y}
\end{eqnarray*}
and furthermore the IHS estimator based on iterative sketched data $\{\bm{S}_t\bm{X}, t=1,\ldots,N\}$,
\begin{eqnarray}\label{IHS}
\hat{\bm{\beta}}^{\rm IHS}_t &=& \arg \min\limits_{\bm{\beta}} \frac{1}{2}\|\bm{S}_t\bm{X}(\bm{\beta}-\hat{\bm{\beta}}^{\rm IHS}_{t-1})\|_2^2-\bm{\beta}^T\bm{X}^T\bm{e}_{t-1}
\nonumber\\
&=&\hat{\bm{\beta}}^{\rm IHS}_{t-1}+(\bm{X}^T\bm{S}^T_t\bm{S}_t\bm{X})^{-1}\bm{X}^T\bm{e}_{t-1},
\end{eqnarray}
where $\bm{e}_t=\bm{y}-\bm{X}\hat{\bm{\beta}}^{\rm IHS}_{t}$, with the initial $\hat{\bm{\beta}}^{\rm IHS}_0$ being provided. Unlike the CS and HS estimators, the IHS estimator is guaranteed to converge to $\bm{\hat\beta}^{\rm LS}$  upon some {\em good event} conditions \citep{pilanci2016iterative}.

The IHS can be interpreted as a first-order gradient descent method with a series of random preconditioners $\bm{M}_t=\bm{X}^T\bm{S}_{t}^T\bm{S}_{t}\bm{X}$ subject to the unit step length. The preconditioner is  widely used to boost optimization algorithms  \citep{knyazev2007steepest,gonen2016solving}. 
However, the IHS is computationally intensive since in every iteration $\bm{M}_t^{-1}$ has to be evaluated for a new random sketching matrix $\bm{S}_t$. To speed up the IHS,  \citet{wang2018large} proposed the pwGradient method to improve the IHS by a fixed well-designed sketching matrix, in which case the IHS reduces to a first order method with a constant preconditioner.  Meanwhile,  \citet{wang2017sketching} proposed another accelerated version of IHS with the fixed sketching matrix, while adopting conjugate gradient descent.  Note that all these sketch methods are based on randomized sketching matrices. 

In this paper we propose to reformulate the IHS estimator $\hat{\bm{\beta}}^{\rm IHS}_t$ by a linear combination of the initial $\hat{\bm{\beta}}^{\rm IHS}_0$ and the full data estimator $\hat{\bm{\beta}}^{\rm LS}_0$. Such reformulation enables us to find a sufficient isometric condition on the sketching matrices so that $\hat{\bm{\beta}}^{\rm IHS}_t$ is guaranteed to converge to $\bm{\hat\beta}^{\rm LS}$ with \revise{geometric convergence}. It then motivates us to propose a deterministic approach for improving the IHS based on $A$-optimal subsampling and adaptive step lengths, which modifies the original second-order IHS method to be an adaptive first-order method. In summary, we improve the IHS method with the following  three contributions: 
\begin{itemize}
	\item {\bf Good initialization.} A good initialization scheme can reduce the inner iteration rounds of IHS while still delivering the same precision.
	We suggest to initialize the IHS with the classical sketch based on the $A$-optimal deterministic subsampling matrix.  To our best knowledge, this is the first attempt to take initialization into account for improving the IHS method.
	\item {\bf Improved preconditioner.} It is critical to find a well-designed preconditioner so that it may be fixed during the iterative sketch process. We propose to construct the preconditioner from $A$-optimal subsample and refine it by adding a ridge term.  Unlike complicated random projection-based methods, we obtain our preconditioner at a low cost by recycling the subsamples in initialization \revise{and make no assumption on the sample size.}
	\item {\bf Adaptive step lengths.} We modify the IHS to be an adaptive first-order method by using a fixed preconditioner subject to  variable step lengths.   The step lengths at each iteration are determined by the exact line search, which ensures the algorithm to enjoy the guaranteed convergence.  
\end{itemize}

Through extensive experiments, our proposed method is shown to achieve the state-of-art performance in terms of both precision and speed when approximating both the ground truth $\bm\beta^*$ and the full data estimator $\bm{\hat\beta}^{\rm LS}$. 
 
\section{Reformulation of IHS}
The original IHS algorithm is displayed in Algorithm \ref{algo-IHS}. For simplicity, we omit the superscript from $\hat{\bm{\beta}}^{\rm IHS}$.  As for the sketch matrix, we use SRHT as it is widely adopted in the literature. 
Following \cite{tropp2011improved}   and \cite{lu2013faster}, when $n=2^k$ where $k$ is a positive integer, the SRHT sketch matrix  is given by
\begin{eqnarray*}
\bm{S}=\sqrt{\frac{n}{m}}\bm{RHD},
\end{eqnarray*}
where $\bm{R}$ is an $m\times n$ matrix with rows chosen uniformly without replacement from the standard bases of $\mathbb{R}^{n}$, $\bm{H}$ is a normalized Walsh--Hadamard matrix of size  $n \times n$, and $\bm{D}$ is an $n \times n$ diagonal matrix with i.i.d. Rademacher entries. 
\revise{Note that when $n$ is not the power of two, $\bm{X}$ is padded with zeros until $n$ achieves the next greater power of two.} 

\begin{algorithm}[ht]\label{algo-IHS}
	\caption{Iterative Hessian Sketch (IHS)}
	\LinesNumbered
	\KwIn{Data $(\bm{X},\bm{y})$,  sketching dimension $m$, iteration number $N$.}
	\textbf{Initialization:} $\hat{\bm{\beta}}_0=\bm{0}$.\\ 
	\For{$t=1, \dots, N$}{
		Generate $\bm{S}_{t}\in\mathbb{R}^{m\times n}$ independently.
		\\
		$
		\Delta\hat{\bm{\beta}}_{t}=(\bm{X}^T\bm{S}_{t}^T\bm{S}_{t}\bm{X})^{-1}\bm{X}^T(\bm{y}-\bm{X}\hat{\bm{\beta}}_{t-1})$.\\
		$\hat{\bm{\beta}}_{t}=\hat{\bm{\beta}}_{t-1}+\Delta\hat{\bm{\beta}}_{t}$.
	}
	\KwResult{$\hat{\bm{\beta}}=\hat{\bm{\beta}}_N$}
\end{algorithm}

In what follows, we present a novel reformulation of IHS. At each step, the update formula in Algorithm \ref{algo-IHS} can be rewritten as
\begin{eqnarray}
\hat{\bm{\beta}}_t&=&\hat{\bm{\beta}}_{t-1}+(\underbrace{\bm{X}^T\bm{S}^T_t\bm{S}_t\bm{X}}_{\bm{M_t}})^{-1}\bm{X}^T(\bm{y}-\bm{X}\hat{\bm{\beta}}_{t-1})\label{IHS-rf0}\\
&=&\hat{\bm{\beta}}_{t-1}+\bm{M}_t^{-1}\bm{X}^T\bm{y}-\underbrace{\bm{M}_t^{-1}\bm{X}^T\bm{X}}_{\bm{A}_t}\hat{\bm{\beta}}_{t-1}\nonumber\\
&=&(\bm{I}_d-\bm{A}_t)\hat{\bm{\beta}}_{t-1}+\bm{A}_t\hat{\bm{\beta}}^{\rm LS}.\label{IHS-rf1}
\end{eqnarray}
By (\ref{IHS-rf1}), we can derive the following lemma by mathematical induction.
\begin{lemma}\label{lemma1}
	Given an initializer $\hat{\bm{\beta}}_0$ and a series of independent sketch matrices $\{\bm{S}_i\}_{i=1}^t$, for any positive integer $t$, we have
	\begin{eqnarray}\label{lemma-fml}
	\hat{\bm{\beta}}_t=\prod_{i=1}^{t}(\bm{I}_d-\bm{A}_i)\hat{\bm{\beta}}_0+\left[\bm{I}_d-\prod_{i=1}^{t}(\bm{I}_d-\bm{A}_i)\right]\hat{\bm{\beta}}^{\rm LS}.
	\end{eqnarray}
where $\bm{A}_t = \bm{M}_t^{-1}\bm{X}^T\bm{X}$ and $ \bm{M}_t = \bm{X}^T\bm{S}^T_t\bm{S}_t\bm{X}$. 
\end{lemma}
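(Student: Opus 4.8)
The plan is to prove the identity (\ref{lemma-fml}) by induction on $t$, using the one-step recurrence (\ref{IHS-rf1}) as the sole engine. Before starting, I would fix the reading convention for the non-commutative product: $\prod_{i=1}^{t}(\bm{I}_d-\bm{A}_i)$ denotes $(\bm{I}_d-\bm{A}_t)(\bm{I}_d-\bm{A}_{t-1})\cdots(\bm{I}_d-\bm{A}_1)$, with the newest factor on the far left. This ordering is not a matter of taste: it is forced by the fact that (\ref{IHS-rf1}) left-multiplies the previous iterate by $(\bm{I}_d-\bm{A}_t)$ at each step.

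For the base case $t=1$, equation (\ref{IHS-rf1}) reads $\hat{\bm{\beta}}_1=(\bm{I}_d-\bm{A}_1)\hat{\bm{\beta}}_0+\bm{A}_1\hat{\bm{\beta}}^{\rm LS}$, which coincides with (\ref{lemma-fml}) at $t=1$, since the single-factor product equals $\bm{I}_d-\bm{A}_1$ and $\bm{I}_d-(\bm{I}_d-\bm{A}_1)=\bm{A}_1$. For the inductive step, I would assume (\ref{lemma-fml}) holds at $t-1$ and substitute that expression for $\hat{\bm{\beta}}_{t-1}$ into (\ref{IHS-rf1}). The $\hat{\bm{\beta}}_0$-term then immediately telescopes, $(\bm{I}_d-\bm{A}_t)\prod_{i=1}^{t-1}(\bm{I}_d-\bm{A}_i)\hat{\bm{\beta}}_0=\prod_{i=1}^{t}(\bm{I}_d-\bm{A}_i)\hat{\bm{\beta}}_0$, producing exactly the desired leading coefficient. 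It remains to collect the factors multiplying $\hat{\bm{\beta}}^{\rm LS}$: expanding $(\bm{I}_d-\bm{A}_t)\big[\bm{I}_d-\prod_{i=1}^{t-1}(\bm{I}_d-\bm{A}_i)\big]+\bm{A}_t$ and cancelling the $\pm\bm{A}_t$ contributions leaves $\bm{I}_d-\prod_{i=1}^{t}(\bm{I}_d-\bm{A}_i)$, which closes the induction.

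The computation is elementary, so the genuine content lies in a single bookkeeping point, which I would flag as the main (if mild) obstacle: the matrices $\bm{A}_i=\bm{M}_i^{-1}\bm{X}^T\bm{X}$ do not commute in general, because each $\bm{M}_i$ depends on a different sketch $\bm{S}_i$. One must therefore append every new factor $(\bm{I}_d-\bm{A}_t)$ strictly on the left throughout the recursion, and, crucially, verify that the cancellation in the $\hat{\bm{\beta}}^{\rm LS}$-coefficient never tacitly swaps two factors. With the leftmost-newest ordering fixed from the outset, no commutativity is ever invoked—the only operations used are left-multiplication by $(\bm{I}_d-\bm{A}_t)$ and the trivial cancellation of $-\bm{A}_t$ against $+\bm{A}_t$—so the identity follows directly and the argument is complete.
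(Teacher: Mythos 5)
Your proof is correct and takes essentially the same route the paper intends: the paper derives Lemma~\ref{lemma1} from the one-step recursion (\ref{IHS-rf1}) by mathematical induction, which is exactly your argument, and your base case and inductive cancellation in the $\hat{\bm{\beta}}^{\rm LS}$-coefficient are both right. Your explicit fixing of the leftmost-newest convention for the non-commutative product $\prod_{i=1}^{t}(\bm{I}_d-\bm{A}_i)$ is a point the paper leaves implicit, and you correctly observe that no commutativity is ever needed.
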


Lemma~\ref{lemma1} reveals that $\hat{\bm{\beta}}_t$ is a linear combination of the initial $\hat{\bm{\beta}}^{\rm IHS}_0$ and the full data estimator $\hat{\bm{\beta}}^{\rm LS}_0$, with the  re-weighting matrices during the iterations. 
We can therefore study the convergence properties of  $\hat{\bm{\beta}}_t$ by looking into $\bm{A}_t$ or $\bm{M}_t$. The following theorem provides a sufficient isometric condition for the convergence guarantee. 

\begin{theorem}\label{thm1}
Given an initial estimator $\hat{\bm{\beta}}_0$ and a set of sketch matrices $\{\bm{S}_t\}_{t=1}^\infty$, the IHS estimator $\hat{\bm{\beta}}_t$ converges to $ \hat{\bm{\beta}}^{\rm LS}$ with \revise{geometric convergence},
\begin{eqnarray}\label{thm1-ieq1}
\|\hat{\bm{\beta}}_t-\hat{\bm{\beta}}^{\rm LS}\|_2\leq\left(\frac{\max\{\varepsilon_1,\varepsilon_2\}}{1-\varepsilon_1}\right)^t\|\hat{\bm{\beta}}_0-\hat{\bm{\beta}}^{\rm LS}\|_2,
\end{eqnarray}
provided that 
for any $\varepsilon_1\in (0, 1/2)$ and $\varepsilon_2 \in  (0,1-\varepsilon_1)$ and for any $\bm{a}\in \mathbb{R}^{d}$\revise{$\backslash\{\bm{0}\}$},
\begin{eqnarray}\label{isocondi}
1-\varepsilon_1 \leq \frac{\bm{a}^T\bm{M}_t\bm{a}}{\bm{a}^T\bm{X}^T\bm{X}\bm{a}}\leq 1+\varepsilon_2, \  t=1,2,\ldots
\end{eqnarray}

\end{theorem}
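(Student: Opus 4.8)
The plan is to reduce the entire statement, via Lemma~\ref{lemma1}, to a single operator bound and then fight for the Euclidean constant. Writing $\bm{G}:=\bm{X}^T\bm{X}$ (symmetric positive definite, since $\hat{\bm{\beta}}^{\rm LS}$ and each $\bm{M}_t^{-1}$ are well defined) and subtracting $\hat{\bm{\beta}}^{\rm LS}$ from both sides of~(\ref{lemma-fml}), the second bracket cancels and leaves
\[
\hat{\bm{\beta}}_t-\hat{\bm{\beta}}^{\rm LS}=\Bigl[\textstyle\prod_{i=1}^{t}(\bm{I}_d-\bm{A}_i)\Bigr]\bigl(\hat{\bm{\beta}}_0-\hat{\bm{\beta}}^{\rm LS}\bigr).
\]
So it suffices to bound the Euclidean operator norm of $\prod_{i=1}^{t}(\bm{I}_d-\bm{A}_i)$ by $\rho^t$, where $\rho:=\max\{\varepsilon_1,\varepsilon_2\}/(1-\varepsilon_1)$. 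The admissible ranges $\varepsilon_1\in(0,1/2)$ and $\varepsilon_2\in(0,1-\varepsilon_1)$ force $\max\{\varepsilon_1,\varepsilon_2\}<1-\varepsilon_1$, hence $\rho<1$; this is exactly what turns the bound into geometric convergence.

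A naive direct attempt---bounding $\|\bm{I}_d-\bm{A}_t\|_2=\|\bm{M}_t^{-1}(\bm{M}_t-\bm{G})\|_2$ by $\|\bm{M}_t^{-1}\|_2\|\bm{M}_t-\bm{G}\|_2$---only returns $\rho\,\kappa(\bm{G})$, so it is too lossy. The correct engine is that although $\bm{I}_d-\bm{A}_t$ is not Euclidean-symmetric, it is self-adjoint in the inner product $\langle\bm{u},\bm{v}\rangle_{\bm{G}}=\bm{u}^T\bm{G}\bm{v}$, because $\bm{A}_t^T\bm{G}=\bm{G}\bm{M}_t^{-1}\bm{G}=\bm{G}\bm{A}_t$. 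First I would recast~(\ref{isocondi}) in Loewner form, $(1-\varepsilon_1)\bm{G}\preceq\bm{M}_t\preceq(1+\varepsilon_2)\bm{G}$. The eigenvalues of $\bm{A}_t=\bm{M}_t^{-1}\bm{G}$ solve $\bm{G}\bm{v}=\lambda\bm{M}_t\bm{v}$, so they equal the Rayleigh quotients $\bm{v}^T\bm{G}\bm{v}/\bm{v}^T\bm{M}_t\bm{v}\in[(1+\varepsilon_2)^{-1},(1-\varepsilon_1)^{-1}]$; hence the eigenvalues of $\bm{I}_d-\bm{A}_t$ lie in $[-\varepsilon_1/(1-\varepsilon_1),\,\varepsilon_2/(1+\varepsilon_2)]$, and a one-line check (using $1-\varepsilon_1<1<1+\varepsilon_2$) bounds both endpoints in magnitude by $\rho$. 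Equivalently, the symmetric conjugate $\bm{C}_t:=\bm{G}^{1/2}(\bm{I}_d-\bm{A}_t)\bm{G}^{-1/2}=\bm{I}_d-(\bm{G}^{-1/2}\bm{M}_t\bm{G}^{-1/2})^{-1}$ is a genuine symmetric matrix with $\|\bm{C}_t\|_2\le\rho$.

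With the $\bm{C}_t$ in hand the product telescopes: $\prod_{i=1}^{t}(\bm{I}_d-\bm{A}_i)=\bm{G}^{-1/2}\bigl(\prod_{i=1}^{t}\bm{C}_i\bigr)\bm{G}^{1/2}$, and submultiplicativity of the symmetric factors gives $\bigl\|\prod_{i=1}^{t}\bm{C}_i\bigr\|_2\le\rho^t$. The step I expect to be the genuine obstacle is precisely the conversion of this contraction into the \emph{Euclidean} norm of~(\ref{thm1-ieq1}) with the same constant. Since $\bm{I}_d-\bm{A}_t$ is $\bm{G}$-self-adjoint but not Euclidean-symmetric, undoing the conjugation strands the nonunitary factors $\bm{G}^{\pm1/2}$ on the outside, and the most one controls in full generality is $\bigl\|\prod_{i=1}^{t}(\bm{I}_d-\bm{A}_i)\bigr\|_2\le\|\bm{G}^{1/2}\|_2\|\bm{G}^{-1/2}\|_2\,\rho^t=\sqrt{\kappa(\bm{G})}\,\rho^t$. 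Removing that stray $\sqrt{\kappa(\bm{G})}$ is the crux.

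To recover~(\ref{thm1-ieq1}) verbatim I would argue along one of two lines. The cleanest is to note that after the standardization/whitening already implicit in the paper's preconditioning viewpoint one has $\bm{G}=\bm{X}^T\bm{X}=\bm{I}_d$; then $\bm{G}^{\pm1/2}=\bm{I}_d$, every $\bm{I}_d-\bm{A}_t$ is exactly symmetric, $\|\bm{I}_d-\bm{A}_t\|_2\le\rho$, and the Euclidean bound follows directly with no condition-number factor. Absent whitening, the honest reading is that the stated contraction is intrinsic to the prediction (i.e. $\bm{G}$-)norm $\|\bm{v}\|_{\bm{G}}=\|\bm{X}\bm{v}\|_2$---which is also the quantity driving the downstream algorithmic analysis---and the displayed Euclidean bound should be understood in that norm. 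I would close by checking the base case $t=1$ directly from $\hat{\bm{\beta}}_1-\hat{\bm{\beta}}^{\rm LS}=(\bm{I}_d-\bm{A}_1)(\hat{\bm{\beta}}_0-\hat{\bm{\beta}}^{\rm LS})$ and then completing the induction, flagging that the norm-conversion step above is the only place where the isotropy of $\bm{X}^T\bm{X}$ is actually needed.
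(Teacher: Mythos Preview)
Your overall plan coincides with the paper's: subtract $\hat{\bm{\beta}}^{\rm LS}$ in Lemma~\ref{lemma1} to get the product formula, then control each factor $\bm{I}_d-\bm{A}_i$. Where the paper diverges from you is in the per-step bound. Rather than conjugating by $\bm{G}^{1/2}$, the paper writes $\bm{I}_d-\bm{M}_i^{-1}\bm{Q}=-(\bm{M}_i^{-1}\bm{Q})(\bm{I}_d-\bm{Q}^{-1}\bm{M}_i)$ and then asserts the two equalities $\|\bm{M}_i^{-1}\bm{Q}\|_2=\|\bm{Q}^{1/2}\bm{M}_i^{-1}\bm{Q}^{1/2}\|_2$ and $\|\bm{I}_d-\bm{Q}^{-1}\bm{M}_i\|_2=\|\bm{I}_d-\bm{Q}^{-1/2}\bm{M}_i\bm{Q}^{-1/2}\|_2$, justified by the observation that each pair is similar and hence shares eigenvalues. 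Reading off $1/(1-\varepsilon_1)$ and $\max\{\varepsilon_1,\varepsilon_2\}$ from the symmetric conjugates and multiplying gives the paper $\|\bm{I}_d-\bm{A}_i\|_2\le\rho$ directly in the Euclidean norm, after which submultiplicativity finishes.

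Your suspicion about the norm-conversion step is well founded, and it lands exactly on the paper's argument. Similarity preserves eigenvalues but not singular values unless the conjugating matrix is unitary; since $\bm{M}_i^{-1}\bm{Q}$ and $\bm{I}_d-\bm{Q}^{-1}\bm{M}_i$ are generally non-symmetric, their spectral norms can strictly exceed those of their symmetric conjugates, and the two displayed equalities in the paper's proof do not hold in general. (Concretely: take $\bm{Q}=\mathrm{diag}(1,K^2)$ and $\bm{Q}^{-1/2}\bm{M}_i\bm{Q}^{-1/2}=\bm{I}_2+\varepsilon\bigl(\begin{smallmatrix}0&1\\1&0\end{smallmatrix}\bigr)$; then condition~(\ref{isocondi}) holds with $\varepsilon_1=\varepsilon_2=\varepsilon$, yet $\|\bm{I}_2-\bm{M}_i^{-1}\bm{Q}\|_2$ grows like $K\varepsilon$.) So the obstacle you isolate---that the clean contraction $\rho^t$ is native to the $\bm{G}$-inner-product and that the Euclidean statement in~(\ref{thm1-ieq1}) in general carries a $\sqrt{\kappa(\bm{G})}$ prefactor---is real, and the paper's proof does not actually circumvent it. Your proposed remedies (read the bound in the $\|\cdot\|_{\bm{G}}$ norm, or assume $\bm{X}^T\bm{X}=\bm{I}_d$ after whitening) are the correct ways to make the statement precise; you have not missed a trick.
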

\begin{proof}
	From Lemma \ref{lemma1}, we have
	$$\hat{\bm{\beta}}_t-\hat{\bm{\beta}}^{\rm LS}=\prod_{i=1}^{t}(\bm{I}_d-\bm{A}_i)(\hat{\bm{\beta}}_0-\hat{\bm{\beta}}^{\rm LS}).$$
	Thus, it holds that
	\begin{eqnarray}\label{thm1-eq1}
	\|\hat{\bm{\beta}}_t-\hat{\bm{\beta}}^{\rm LS}\|_2\leq\prod_{i=1}^{t}\|\bm{I}_d-\bm{A}_i\|_2\|\hat{\bm{\beta}}_0-\hat{\bm{\beta}}^{\rm LS}\|_2.
	\end{eqnarray}
	Note that 
	\begin{eqnarray*}
	\|\bm{I}_d-\bm{A}_i\|_2&=&\|\bm{I}_d-\bm{M}_i^{-1}\bm{Q}\|_2\nonumber\\
	&\leq&\|\bm{M}_i^{-1}\bm{Q}\|_2\|\bm{I}_d-\bm{Q}^{-1}\bm{M}_i\|_2,
	\end{eqnarray*}
	where $\bm{Q}=\bm{X}^T\bm{X}$.
	Since $\bm{M}_i^{-1}\bm{Q}$ shares the same eigenvalues with $\bm{Q}^{1/2}\bm{M}_i^{-1}\bm{Q}^{1/2}$ \revise{(See 6.54 in \cite{seber2008matrix})}, we have
	$$\|\bm{M}_i^{-1}\bm{Q}\|_2=\|\bm{Q}^{1/2}\bm{M}_i^{-1}\bm{Q}^{1/2}\|_2,$$
	$$\|\bm{I}_d-\bm{Q}^{-1}\bm{M}_i\|_2=\|\bm{I}_d-\bm{Q}^{-1/2}\bm{M}_i\bm{Q}^{-1/2}\|_2.$$
	From the conditions  (\ref{isocondi}), we know that
	\begin{align}\label{isocondi2}
	1-\varepsilon_1\leq\frac{\bm{b}^T\bm{Q}^{-1/2}\bm{M}_i\bm{Q}^{-1/2}\bm{b}}{\bm{b}^T\bm{b}}\leq 1+\varepsilon_2,
	\end{align}
	for any $\bm{b}=\bm{Q}^{1/2}\bm{a}\in\mathbb{R}^d$\revise{$\backslash\{\bm{0}\}$}.
	Thus, we have that
	\begin{eqnarray*}
	\|\bm{M}_i^{-1}\bm{Q}\|_2 &\leq& \frac{1}{1-\varepsilon_1},\\
	\|\bm{I}_d-\bm{Q}^{-1}\bm{M}_i\|_2 &\leq&  \max\{\varepsilon_1,\varepsilon_2\},\\
	\|\bm{I}_d-\bm{A}_i\|_2&\leq & \frac{ \max\{\varepsilon_1,\varepsilon_2\}}{1-\varepsilon_1}.
	\end{eqnarray*}
	So, (\ref{thm1-eq1}) becomes
	\begin{eqnarray*}
	\|\hat{\bm{\beta}}_t-\hat{\bm{\beta}}^{\rm LS}\|_2\leq\left(\frac{ \max\{\varepsilon_1,\varepsilon_2\}}{1-\varepsilon_1}\right)^t\|\hat{\bm{\beta}}_0-\hat{\bm{\beta}}^{\rm LS}\|_2.
	\end{eqnarray*}
	It is clear that the rate $\in(0,1)$, so $\hat{\bm{\beta}}_t$ converges to $\hat{\bm{\beta}}^{\rm LS}$.\qed
\end{proof}

Theorem~\ref{thm1} is meaningful in different ways. \revise{Firstly,} when $\varepsilon_1=\rho, \varepsilon_2=\rho/2$ and $\rho\in (0,1/2)$, it can be checked that Theorem~\ref{thm1} corresponds to the main result of \cite{pilanci2016iterative} under the so-called \emph{good event} condition.  
 It also mimics the Johnson-Lindenstrauss lemma \citep{johnson1984extensions} when $\varepsilon_1=\varepsilon_2$.
\revise{Moreover, our theorem makes no assumption about the randomness of $\bm{S}_t$ and it is applicable to all kinds of sketch matrices satisfying (\ref{isocondi}). In the case of random $\bm{S}_t$, let $\varepsilon_1=\varepsilon_2=\varepsilon$, $\bm{U}=\bm{X}\bm{Q}^{-1/2}$ be the orthonormal basis of $\bm{X}$'s column space. It suffices to obtain (\ref{isocondi2}) and hence (\ref{isocondi}) once $\bm{S}_t$ satisfies $\|\bm{I}_d-\bm{U}^T\bm{S}_t^T\bm{S}_t\bm{U}\|_2\leq \varepsilon$. Theorem 2.4 in \cite{woodruff2014sketching} showed one way to construct such random $\bm{S}_t$ with high probability.} 


The original IHS method requires calculating $\bm{M}_t$ with time complexity $O(nd\log(d))$ repeatedly for each $t$; \revise{see Section 2.5 of \cite{pilanci2016iterative}.} Theorem~\ref{thm1} indicates that a fixed sketch matrix such $\bm{S}_t=\bm{S}_1, \forall t\geq2$ can also ensure the convergence if the condition (\ref{isocondi}) is satisfied.  This result provides enables us to reduce the computational cost for the original IHS method. For example, \cite{wang2018large} proposed the pwGradient method to improve the IHS by constructing a well-designed sketch algorithm, which can be viewed as an application of Theorem~\ref{thm1} by letting $\varepsilon_1=2\theta-\theta^2$, $\varepsilon_2=2\theta+\theta^2$ and $\theta\in (0,1/4)$. 

In the meanwhile, we can also interpret the IHS approach based on a transformed space. Let the preconditioner $\bm{M}_t=\bm{M}$, multiply
 $\bm{M}^{1/2}$ to the both sides of (\ref{IHS-rf0}), and denote $\bm{B}=\bm{M}^{-1/2}\bm{X}^T\bm{X}\bm{M}^{-1/2}$ and $\hat{\bm{\eta}}_t=\bm{M}^{1/2}\hat{\bm{\beta}}_t$ for $t=1,2,\ldots$. Then, we have
\begin{eqnarray}\label{trans}
\hat{\bm{\eta}}_t=\hat{\bm{\eta}}_{t-1}+\bm{M}^{-1/2}\bm{X}^T(\bm{y}-\bm{X}\bm{M}^{-1/2}\hat{\bm{\eta}}_{t-1}),
\end{eqnarray}
which corresponds to the gradient descent update when minimizing the following least squares objective 
\begin{eqnarray}
\tilde{f}(\bm{\eta})&=&\frac{1}{2}\|\bm{X}\bm{M}^{-1/2}\bm{\eta}-\bm{y}\|_2^2 \nonumber\\
&=& \frac{1}{2}\bm{\eta}^T\bm{B}\bm{\eta}-\bm{\eta}^T\bm{M}^{-1/2}\bm{X}^T\bm{y}. \nonumber
\end{eqnarray}
\revise{The one-to-one mapping between $\hat{\bm{\beta}}_t$ and $\hat{\bm{\eta}}_t$ ensures that one can show the convergence of $\hat{\bm{\beta}}_t$ and obtain its optima via equivalently analyzing $\hat{\bm{\eta}}_t$.}
Based on this observation, \cite{wang2017sketching} proposed the acc-IHS method by fixing the preconditioner and replacing the gradient descent with the conjugate counterpart in the transformed parameter space. It is worth mentioning that both the pwGradient and acc-IHS methods belong to the randomized approach based on random projections, while the SRHT sketching needs to operate on all the entries of $\bm{X}$. We hence seek to improve the IHS method in a more efficient and deterministic way.

\section{Adaptive IHS with $A$-Optimal Subsampling}
We extend the concept of sketch matrix from randomized settings to deterministic settings, by introducing  $\bm{\delta}\in\mathbb{R}^n$ to indicate where an observation is  selected, i.e., $\delta_i=1$ if sample $(\bm{x}_i,y_i)$ or $\bm{x}_i$ is included, $\delta_i=0$ otherwise. It is assumed that $\sum_{i=1}^{n}\delta_i=m$, which implies that the corresponding sketch matrix satisfies $\bm{S}^T\bm{S}=m^{-1}\mbox{diag}(\bm{\delta})$. It is our objective to find a good $\bm{\delta}$ subject to  certain optimality criterion.

\subsection{$A$-Optimal Classical Sketch}
The convergence of $\hat{\bm{\beta}}_t$ also depends on $\|\hat{\bm{\beta}}_0-\hat{\bm{\beta}}^{\rm LS}\|_2$ as shown in (\ref{thm1-ieq1}), which motivates us to find a good initializer. We achieve this goal by proposing an $A$-optimal estimator under the classical sketch scheme.
Suppose that we select a subset of $m$ observations. 
The least squares estimator based on the subdata and the corresponding covariance matrix are given by
\begin{eqnarray}
\hat{\bm{\beta}}^{\rm CS}(\bm{\delta})&=&\left( \frac{1}{m}\sum_{i=1}^{n}\delta_i\bm{x}_i\bm{x}_i^T \right)^{-1}\left(\frac{1}{m}\sum_{i=1}^{n}\delta_i\bm{x}_iy_i\right)\label{AOPT-CS},\\
\mbox{cov}(\hat{\bm{\beta}}^{\rm CS}(\bm{\delta}))&=&\sigma^2\left(\sum_{i=1}^{n}\delta_i\bm{x}_i\bm{x}_i^T\right)^{-1}. 
\end{eqnarray}
Let $\bm{M}(\bm{\delta})=\sum_{i=1}^{n}\delta_i\bm{x}_i\bm{x}_i^T$.
Following the $A$-optimality criterion in experimental design \citep{pukelsheim1993optimal}, we seek the subdata as indicated by $\bm{\delta}$ that minimizes the averaged variance of $\hat{\bm{\beta}}(\bm{\delta})$, which is proportional to the trace of $\bm{M}^{-1}(\bm{\delta})$. Formally, our goal can be formulated as the following discrete optimization problem,
\begin{eqnarray*}
\min\limits_{\bm{\delta}\in\{0,1\}^n} {\rm Tr}[\bm{M}^{-1}(\bm{\delta})],\quad\mbox{subject to} \sum_{i=1}^n\delta_i=m.
\end{eqnarray*}
However, it is NP-hard to solve it exactly, so we turn to derive a \revise{upper} bound of $\mbox{Tr}[\bm{M}^{-1}(\bm{\delta})]$, which leads to Algorithm \ref{algo-aoptcs}.
\revise{
\begin{theorem}\label{aoptcsthm}
Let $\bm{Q}=\bm{X}^T\bm{X}$, $\kappa(\cdot)$ denote the condition number, $\mathcal{S}=\{\bm{\delta}|\bm{M}(\bm{\delta})>0, \sum_{i=1}^{n}\delta_i=m\}$ be the collection of all feasible $\bm{\delta}$ solutions and $\lambda_{\min}$ correspond to the smallest eigenvalue. Assuming that $C=\inf_{\bm{\delta}\in\mathcal{S}}\lambda_{\min}(\bm{M}(\bm{\delta}))>0$, for any given $\bm{\delta}\in\mathcal{S}$,
\begin{eqnarray*}
{\rm Tr} [\bm{M}^{-1}(\bm{\delta})]\leq \frac{1}{\lambda_{\min}(\bm{Q})}\left[d+\frac{\kappa(\bm{Q})}{C}\sum_{i=1}^{n}(1-\delta_i)\|\bm{x}_i\|^2_2\right].
\end{eqnarray*} 
\end{theorem}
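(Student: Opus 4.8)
The plan is to compare $\bm{M}(\bm{\delta})^{-1}$ with the full-data inverse $\bm{Q}^{-1}$ and to charge the difference to the information contained in the discarded rows. Writing $\bm{M}=\bm{M}(\bm{\delta})$, the first step is to record that
\begin{eqnarray*}
\bm{R}:=\bm{Q}-\bm{M}=\sum_{i=1}^{n}(1-\delta_i)\bm{x}_i\bm{x}_i^T\succeq\bm{0}
\end{eqnarray*}
is positive semidefinite with ${\rm Tr}[\bm{R}]=\sum_{i=1}^{n}(1-\delta_i)\|\bm{x}_i\|_2^2$, exactly the sum on the right-hand side of the claim; this isolates the dependence on the unselected observations at the outset.

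Next I would apply the resolvent identity $\bm{M}^{-1}-\bm{Q}^{-1}=\bm{M}^{-1}(\bm{Q}-\bm{M})\bm{Q}^{-1}=\bm{M}^{-1}\bm{R}\bm{Q}^{-1}$, legitimate because both $\bm{Q}$ and $\bm{M}$ are invertible on $\mathcal{S}$, and take traces to obtain
\begin{eqnarray*}
{\rm Tr}[\bm{M}^{-1}]={\rm Tr}[\bm{Q}^{-1}]+{\rm Tr}[\bm{M}^{-1}\bm{R}\bm{Q}^{-1}].
\end{eqnarray*}
The first term is handled immediately: $\bm{Q}$ has $d$ eigenvalues, each at least $\lambda_{\min}(\bm{Q})$, so ${\rm Tr}[\bm{Q}^{-1}]\le d/\lambda_{\min}(\bm{Q})$, which supplies the leading $d/\lambda_{\min}(\bm{Q})$ term of the bound.

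The core of the proof is the cross term ${\rm Tr}[\bm{M}^{-1}\bm{R}\bm{Q}^{-1}]$. I would use the elementary fact that ${\rm Tr}[\bm{A}\bm{R}]\le\|\bm{A}\|_2\,{\rm Tr}[\bm{R}]$ for every matrix $\bm{A}$ and every $\bm{R}\succeq\bm{0}$, which follows by diagonalizing $\bm{R}=\sum_j\lambda_j\bm{v}_j\bm{v}_j^T$ and noting $\bm{v}_j^T\bm{A}\bm{v}_j\le\|\bm{A}\|_2$. Cycling the trace so that $\bm{R}$ sits on the outside and applying submultiplicativity gives
\begin{eqnarray*}
{\rm Tr}[\bm{M}^{-1}\bm{R}\bm{Q}^{-1}]={\rm Tr}[(\bm{Q}^{-1}\bm{M}^{-1})\bm{R}]\le\|\bm{Q}^{-1}\|_2\,\|\bm{M}^{-1}\|_2\,{\rm Tr}[\bm{R}].
\end{eqnarray*}
Here $\|\bm{Q}^{-1}\|_2=1/\lambda_{\min}(\bm{Q})$, and the hypothesis $\lambda_{\min}(\bm{M}(\bm{\delta}))\ge C$ for all feasible $\bm{\delta}$ gives $\|\bm{M}^{-1}\|_2\le 1/C$; this is exactly where $C>0$ enters, and using the infimum $C$ rather than a $\bm{\delta}$-specific eigenvalue is what keeps the coefficient uniform over $\mathcal{S}$. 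Combining the two terms yields ${\rm Tr}[\bm{M}^{-1}]\le\lambda_{\min}(\bm{Q})^{-1}\big[d+C^{-1}{\rm Tr}[\bm{R}]\big]$, and since $\kappa(\bm{Q})\ge 1$ one may enlarge $C^{-1}$ to $\kappa(\bm{Q})/C$ to recover the stated inequality.

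I expect the only genuine obstacle to be the cross term: because $\bm{M}^{-1}\bm{R}\bm{Q}^{-1}$ is not symmetric, one cannot simply bound it by reading off eigenvalues, and the estimate must route through the positive semidefiniteness of $\bm{R}$ via the trace inequality above (equivalently, by inserting $\bm{R}^{1/2}$ and symmetrizing). The remaining care is structural rather than computational — one must keep $\bm{Q}$ nonsingular and, more importantly, $\bm{M}(\bm{\delta})$ uniformly away from singularity, which is precisely the content of $C>0$. It is worth flagging that the factor $\kappa(\bm{Q})$ is a deliberate relaxation; the point of the bound is that its right-hand side depends on $\bm{\delta}$ only through $\sum_{i=1}^{n}(1-\delta_i)\|\bm{x}_i\|_2^2$, so minimizing it reduces to the trivial rule of keeping the $m$ rows of largest norm, which is what Algorithm~\ref{algo-aoptcs} exploits.
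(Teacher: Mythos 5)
Your proof is correct, but it takes a genuinely different route from the paper. The paper works in the whitened coordinates of the SVD $\bm{X}=\bm{U\Sigma V}^T$: it reduces ${\rm Tr}[\bm{M}^{-1}(\bm{\delta})]$ to ${\rm Tr}[(\bm{I}_d-\bm{U}^T\bar{\bm{W}}\bm{U})^{-1}]$ with $\bar{\bm{W}}={\rm diag}(\bm{1}-\bm{\delta})$, verifies $\|\bm{U}^T\bar{\bm{W}}\bm{U}\|_2<1$, expands the inverse as a Neumann series to get $d+{\rm Tr}(\bm{U}^T\bar{\bm{W}}\bm{U})/(1-\|\bm{U}^T\bar{\bm{W}}\bm{U}\|_2)$, and then bounds the numerator and denominator separately, each passage through $\bm{Q}^{\pm 1/2}$ costing an extreme eigenvalue of $\bm{Q}$ — which is exactly where the factor $\kappa(\bm{Q})$ arises. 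You instead use the resolvent identity $\bm{M}^{-1}-\bm{Q}^{-1}=\bm{M}^{-1}\bm{R}\bm{Q}^{-1}$ with $\bm{R}=\bm{Q}-\bm{M}\succeq\bm{0}$ and the trace inequality ${\rm Tr}[\bm{AR}]\leq\|\bm{A}\|_2\,{\rm Tr}[\bm{R}]$, which avoids the SVD, the infinite series, and the separate check that $\bm{U}^T\bar{\bm{W}}\bm{U}$ is a strict contraction (your argument only needs $\bm{M}$ and $\bm{Q}$ invertible, with $C>0$ controlling $\|\bm{M}^{-1}\|_2$). Notably, your route yields the strictly sharper constant ${\rm Tr}[\bm{M}^{-1}(\bm{\delta})]\leq\lambda_{\min}^{-1}(\bm{Q})\left[d+C^{-1}\sum_{i=1}^{n}(1-\delta_i)\|\bm{x}_i\|_2^2\right]$, improving on the stated bound by the factor $\kappa(\bm{Q})\geq 1$, which you correctly reinstate only as a deliberate relaxation to match the theorem; since the algorithmic conclusion (keep the $m$ rows of largest norm) depends on $\bm{\delta}$ only through ${\rm Tr}[\bm{R}]$, both versions support Algorithm~2 equally, but your bound is the tighter statement. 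What the paper's whitened-space formulation buys in exchange is contact with the quantity $\lambda_{\min}(\bm{Q}^{-1/2}\bm{M}(\bm{\delta})\bm{Q}^{-1/2})$, the same object appearing in the isometry condition of Theorem~1, which makes the two results visually cohere; your argument is the more elementary and more economical derivation of the inequality itself.
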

\begin{proof}
Let $\bm{W}={\rm diag}(\bm{\delta})$, $\bar{\bm{W}} = {\rm diag}(\bm{1}-\bm{\delta})$, $\lambda_{\max}$ be the largest eigenvalue, $\bm{X}=\bm{U}\bm{\Sigma}\bm{V}^T$ denote the SVD of $\bm{X}$ where $\bm{U}\in\mathbb{R}^{n\times d}$, $\bm{\Sigma}\in\mathbb{R}^{d\times d}$ and $\bm{V}\in\mathbb{R}^{d\times d}$. Therefore, we have 
\begin{eqnarray}
&& {\rm Tr}[\bm{M}^{-1}(\bm{\delta})] \nonumber\\
& = & {\rm Tr}\left[(\bm{X}^T\bm{W}\bm{X})^{-1}\right] \nonumber\\
& = & {\rm Tr}\left[\bm{V}\bm{\Sigma}^{-1}(\bm{U}^T\bm{W}\bm{U})^{-1}\bm{\Sigma}^{-1}\bm{V}^T\right] \nonumber\\
& \leq &\frac{1}{\lambda_{\min}(\bm{Q})}{\rm Tr}\left[(\bm{U}^T\bm{W}\bm{U})^{-1}\right] \quad \left(\lambda_{\max}^2(\bm{\Sigma}^{-1})=\frac{1}{\lambda_{\min}(\bm{Q})}, \bm{V}^T\bm{V}=\bm{I}_d\right) \nonumber\\
& = & \frac{1}{\lambda_{\min}(\bm{Q})}{\rm Tr}\left[(\bm{I}_d-\bm{U}^T\bar{\bm{W}}\bm{U})^{-1}\right] \quad (\bm{U}^T\bm{U}=\bm{I}_d). \label{thm2-eq1}
\end{eqnarray}
It is easy to check that $\|\bm{U}^T\bar{\bm{W}}\bm{U}\|_2\leq\|\bm{U}^T\bm{U}\|_2\|\bar{\bm{W}}\|_2\leq 1$ where the first inequality follows $\bm{U}^T\bar{\bm{W}}\bm{U}\leq \bm{U}^T\bm{U}\|\bar{\bm{W}}\|_2$ and 10.47 in \cite{seber2008matrix}. We can further conclude $\|\bm{U}^T\bar{\bm{W}}\bm{U}\|_2<1$ as $\bm{\delta}\in\mathcal{S}$. According to the extension of Corollary 5.6.16. in \cite{horn2012matrix}, we have 
$$(\bm{I}_d-\bm{U}^T\bar{\bm{W}}\bm{U})^{-1}=\bm{I}_d+\sum_{k=1}^{\infty}(\bm{U}^T\bar{\bm{W}}\bm{U})^k.$$
So the trace in (\ref{thm2-eq1}) can be further bounded as follows,
\begin{eqnarray}
&&{\rm Tr}\left[(\bm{I}_d-\bm{U}^T\bar{\bm{W}}\bm{U})^{-1}\right] \nonumber\\
& = & d+\sum_{k=1}^{\infty}{\rm Tr}[(\bm{U}^T\bar{\bm{W}}\bm{U})^k] \nonumber\\
& \leq & d+\sum_{k=1}^{\infty}\|\bm{U}^T\bar{\bm{W}}\bm{U}\|^{k-1}_2 {\rm Tr}(\bm{U}^T\bar{\bm{W}}\bm{U}) \nonumber\\
& = & d + \frac{{\rm Tr}(\bm{U}^T\bar{\bm{W}}\bm{U})}{1-\|\bm{U}^T\bar{\bm{W}}\bm{U}\|_2} \quad (\mbox{Taylor series}). \label{thm2-eq2}
\end{eqnarray}
Note that the denominator
\begin{eqnarray*}
1-\|\bm{U}^T\bar{\bm{W}}\bm{U}\|_2&=&\lambda_{\min}(\bm{U}^T\bm{W}\bm{U})=\lambda_{\min}(
\bm{V}\bm{U}^T\bm{W}\bm{U}\bm{V}^T)\\
&=&\lambda_{\min}(\bm{Q}^{-1/2}\bm{M}(\bm{\delta})\bm{Q}^{-1/2}),
\end{eqnarray*}
where the second and third equations follow the definition of eigenvalue and $\bm{X}$'s SVD respectively. With 6.76 in \cite{seber2008matrix}, the denominator is lower bounded by
\begin{eqnarray*}
\lambda_{\min}(\bm{Q}^{-1/2}\bm{M}(\bm{\delta})\bm{Q}^{-1/2})&\geq& \lambda_{\min}(\bm{M}(\bm{\delta}))\lambda^2_{\min}(\bm{Q}^{-1/2})\\
&=&\lambda_{\min}(\bm{M}(\bm{\delta}))/\lambda_{\max}(\bm{Q})\\
&\geq&C/\lambda_{\max}(\bm{Q}).
\end{eqnarray*} 
Similarly, for the nominator, we have 
\begin{eqnarray*}
{\rm Tr}(\bm{U}^T\bar{\bm{W}}\bm{U})&=&{\rm Tr}(\bm{V}\bm{U}^T\bar{\bm{W}}\bm{U}\bm{V}^T)={\rm Tr}(\bm{Q}^{-1/2}\bm{M}(\bm{1}-\bm{\delta})\bm{Q}^{-1/2})\\
&\leq&\lambda^2_{\max}(\bm{Q}^{-1/2}){\rm Tr}(\bm{M}(\bm{1}-\bm{\delta}))\\
&=&\sum_{i=1}^n(1-\delta_i)\|\bm{x}_i\|^2_2/\lambda_{\min}(\bm{Q}).
\end{eqnarray*}
The inequality follows by plugging above two bounds into (\ref{thm2-eq2}).
\qed
\end{proof}
}
\revise{By Theorem \ref{aoptcsthm}, we can seek an approximately $A$-optimal design by shrinking its upper bound, i.e., maximizing $\sum_{i=1}^n\delta_i\|\bm{x}_i\|_2^2$. }
Our subsampling approach is described as follows.
\begin{algorithm}\label{algo-aoptcs}
	\caption{$A$-Optimal Classical Sketch}
	\LinesNumbered
	\KwIn{Data matrix $\bm{X}$, subsample size $m$.}
	Compute $\|\cdot\|_2$ for each sample $\bm{x_i}$.\\
	Select $m$ subsamples with the largest $\ell_2$ norm indicated by $\bm{\delta}$.\\
	Obtain the least square estimator $\hat{\bm{\beta}}^{\rm CS}(\bm{\delta})$ on the subset following (\ref{AOPT-CS}).
\end{algorithm}
\begin{remark}
	The time for norm calculation is $O(nd)$, while sorting requires on average $O(n)$ operations \revise{\citep{martinez2004partial}}. For step 3, it costs $O(md^2+d^3)$ to obtain $\hat{\bm{\beta}}^{\rm CS}(\bm{\delta})$. In total, the time complexity of Algorithm \ref{algo-aoptcs} is $O(nd+md^2)$. The time can be further reduced to $O(nd)$ when it comes to $n>md$, \revise{which is a common scenario for massive data. Furthermore, this constraint can also be a reference for the sketch size determination as it implies that $m$ should not exceed $n/d$ for an efficient complexity.} From the time complexity perspective, our algorithm is as efficient as the $D$-optimality based method in \cite{wang2018information}, while ours is more suitable for parallel computation.
\end{remark}

\begin{remark}
	\revise{In practice, when the data matrix $\bm{X}$ is centered and scaled initially}, our algorithm tends to choose the extreme samples with the farthest distances to the center, which is consistent with the conclusion in \cite{wang2018information}.  
\end{remark}

As shown in Figure \ref{CS-Cpr}, the performance of the $A$-optimal method uniformly dominates that of randomized sketch matrices and classical sketching scheme. Furthermore, the MSE of our approach decreases with an increase of $n$. In this regard, our $A$-optimal estimator serves as a good initialization for further enhancements.

\begin{figure}[ht!]
	\centering
	\subfigure[Normal]{
	\begin{minipage}[b]{0.4\linewidth}
			\includegraphics[width=\linewidth]{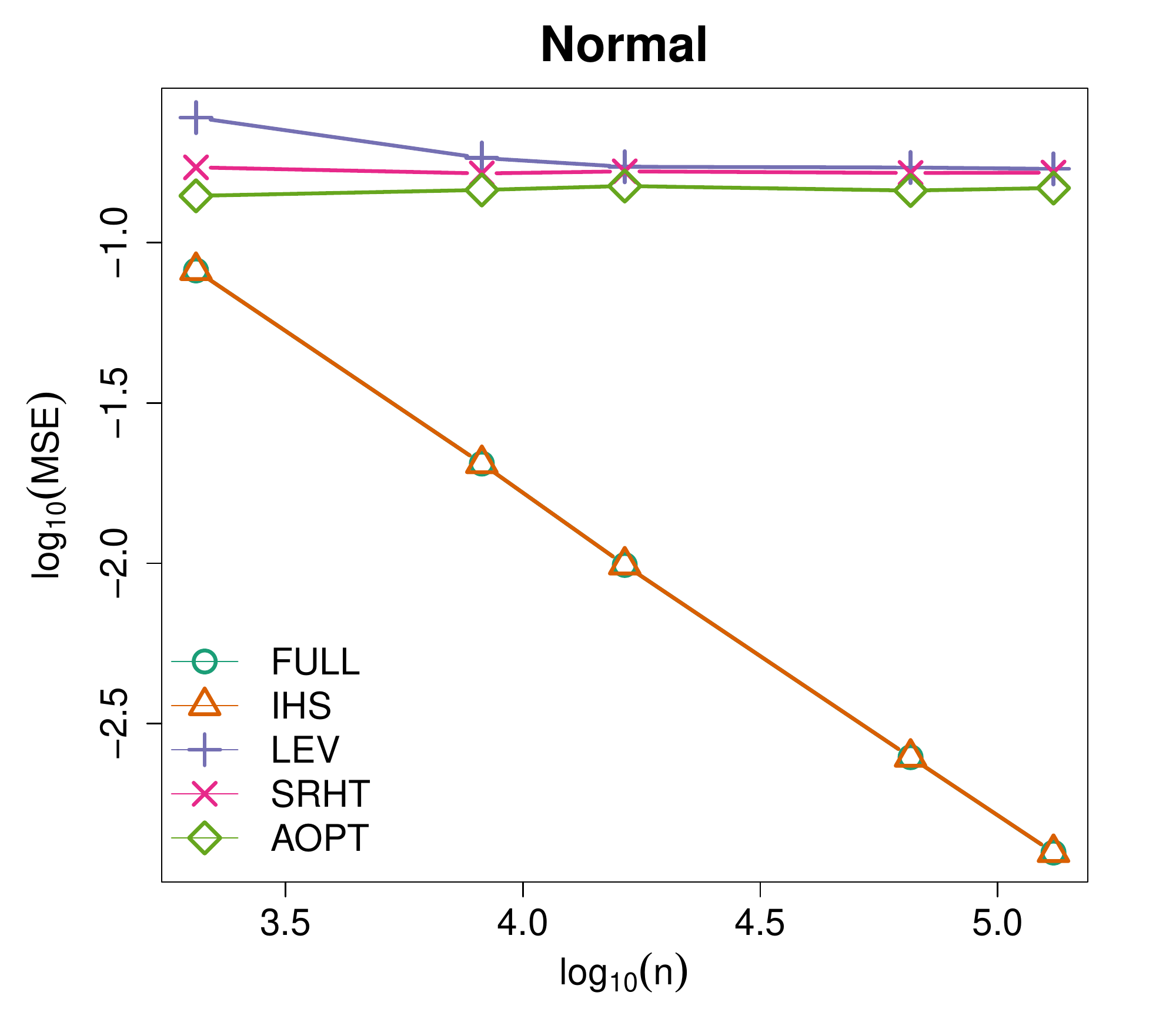}
	\end{minipage}}
	\subfigure[Log-Normal]{
	\begin{minipage}[b]{0.4\linewidth}
		\includegraphics[width=\linewidth]{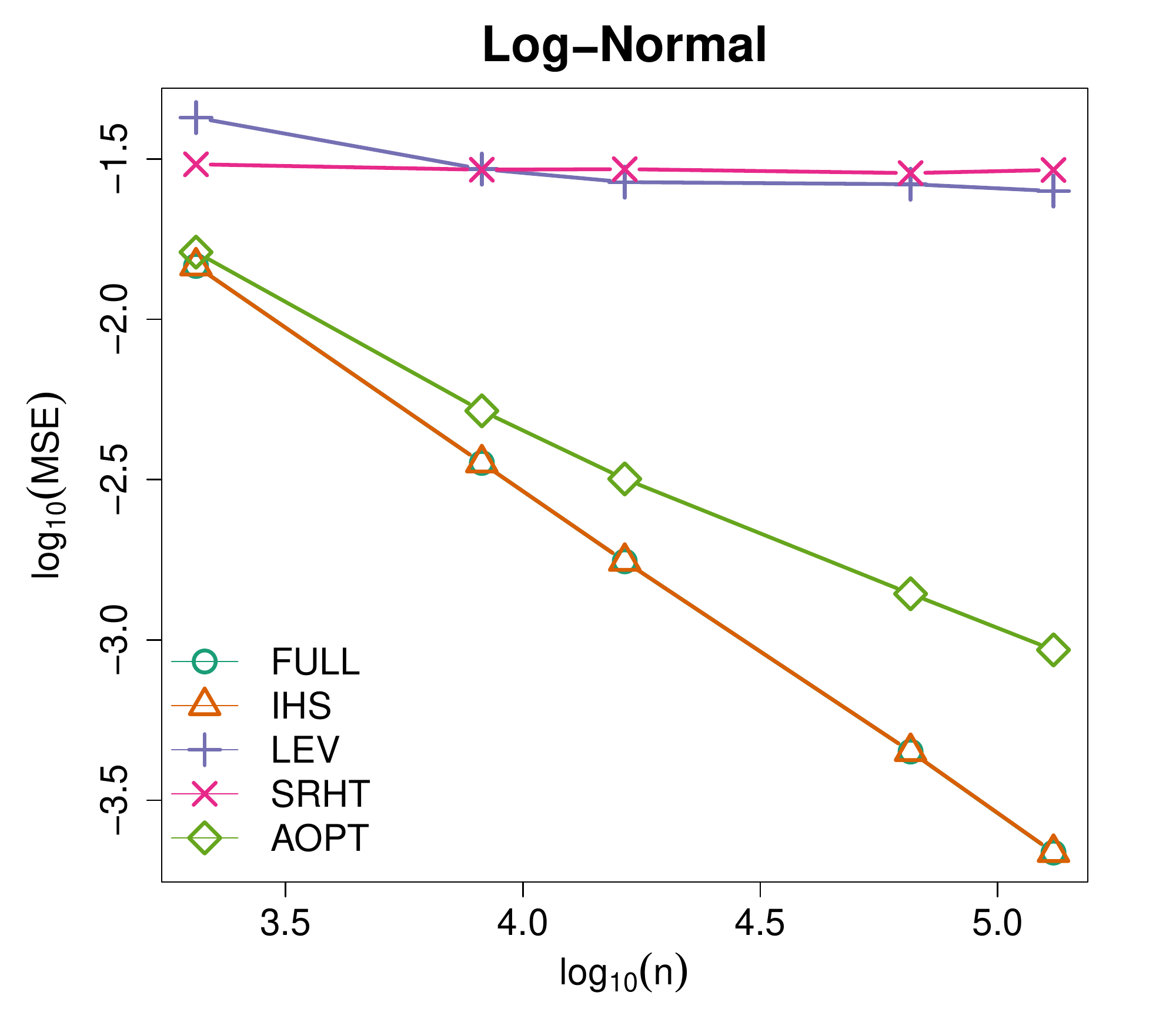}
	\end{minipage}}
	\subfigure[$t_2$]{
	\begin{minipage}[b]{0.4\linewidth}
	\includegraphics[width=\linewidth]{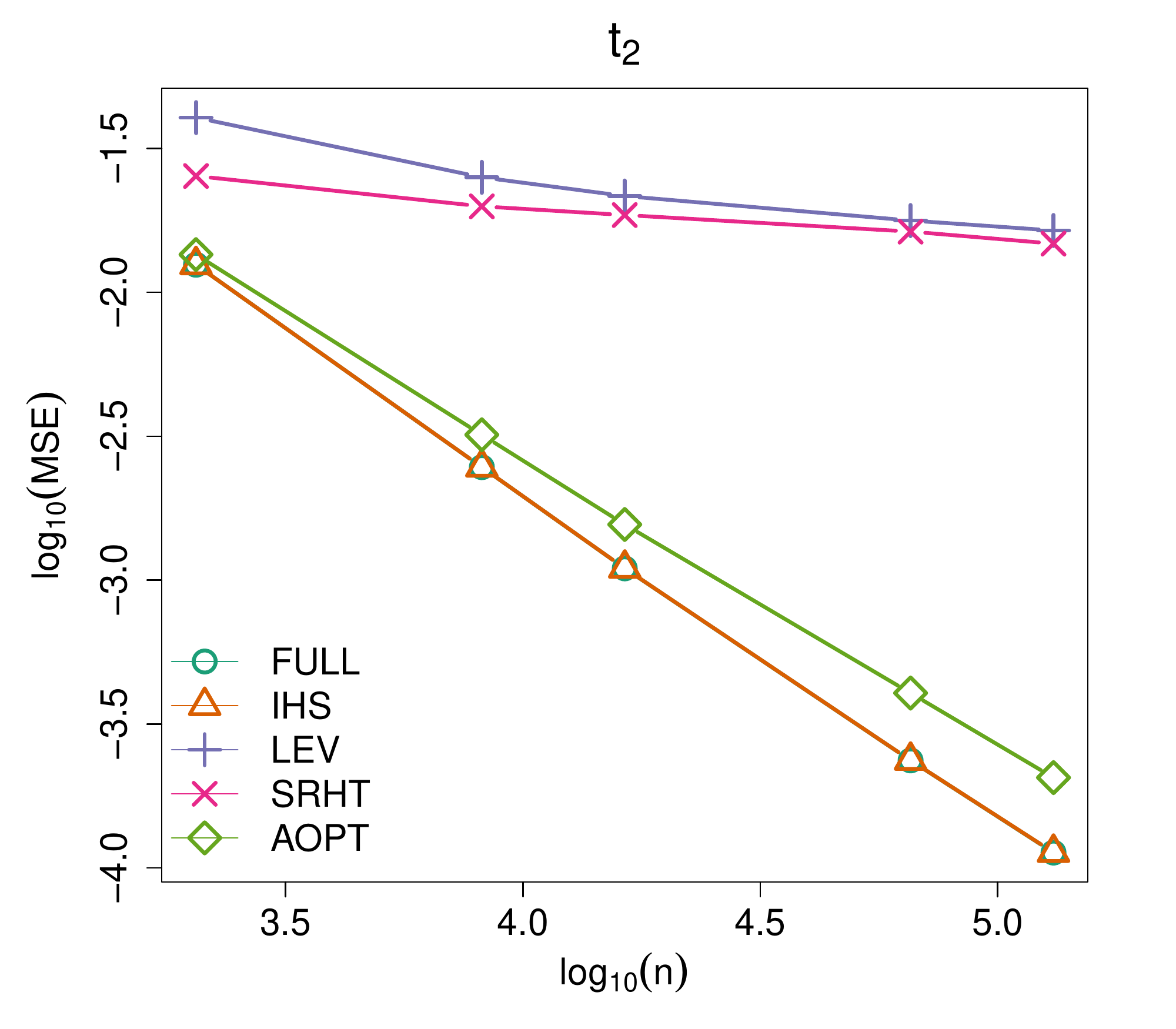}
	\end{minipage}}
	\subfigure[Mixture]{
	\begin{minipage}[b]{0.4\linewidth}
	\includegraphics[width=\linewidth]{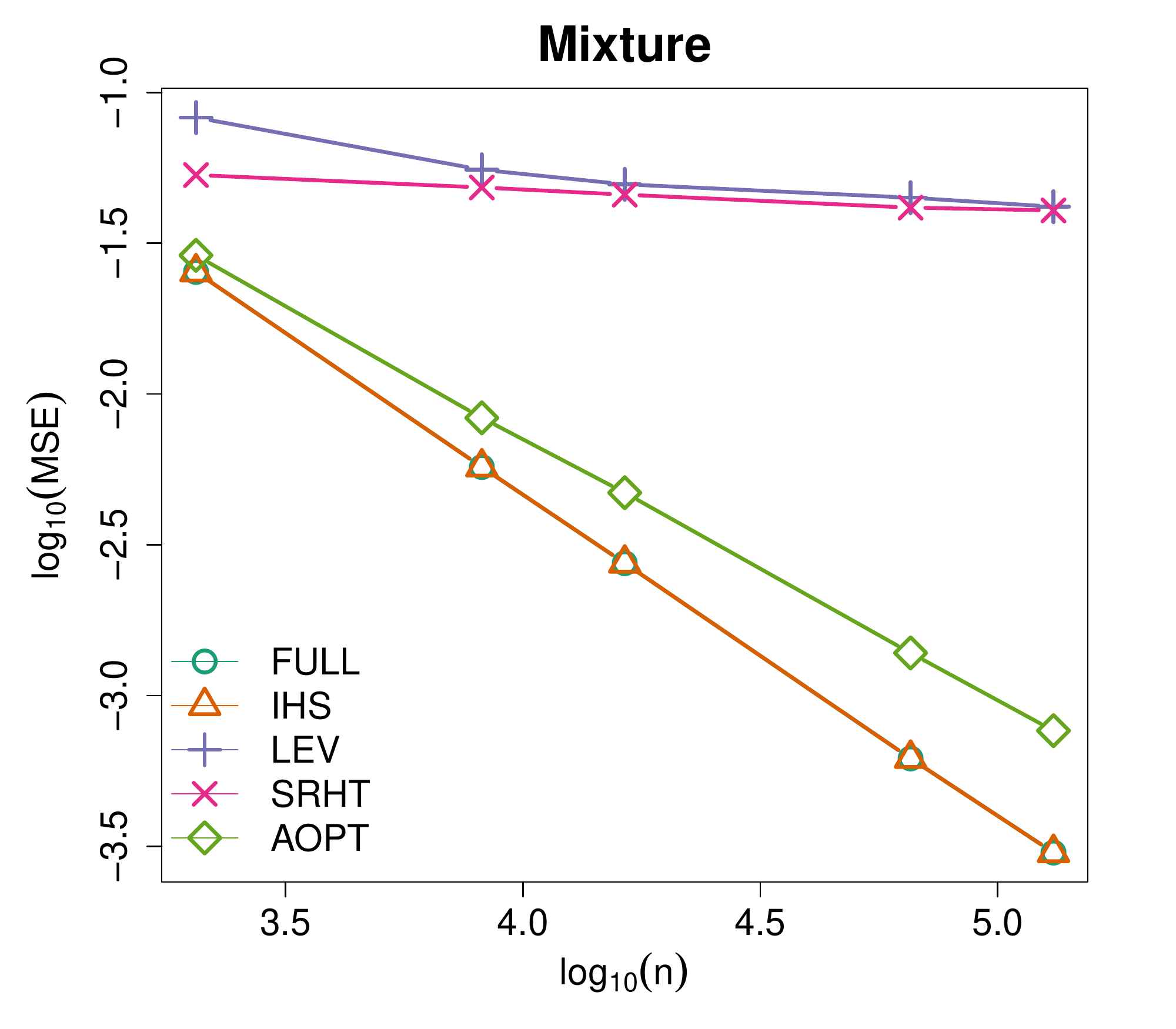}
	\end{minipage}}
	\begin{mycaption}\label{CS-Cpr}
		Plots of mean squared errors (MSE) versus the row dimensions $n\in\{2^k|k=11,13,14,16,17\}$, where $d=10$ and we conduct IHS for a total of $N=10$ rounds with a sketch size $m=10d=100$. FULL means calculating LSE with full data. SRHT, LEV and AOPT represent the estimators calculated under the classical sketch with corresponding sketch matrices. Moreover, we apply a sketch size of  $M=Nm=1000$ to compute those estimators for the sake of fairness. Each point corresponds to the result averaged over \revise{$1000$} trials.
	\end{mycaption}
\end{figure}

\subsection{Improved Preconditioner}
Note that $\hat{\bm{\beta}}$ is mainly determined by a set of matrices $\{\bm{A}_i=\bm{M}_i^{-1}\bm{X}^T\bm{X}\}_{i=1}^N$. Rather than specifying $\bm{M}_i$'s by repeatedly sketching in IHS or fixing $\bm{M}_i=\bm{M}$ in pwGradient and acc-IHS, we make a compromise by defining $\bm{M}_i=\alpha_i^{-1}\bm{M}$, that is,
\begin{equation}\label{NNN}
\hat{\bm{\beta}}_t=\prod_{i=1}^{t}(\bm{I}_d-\alpha_i\bm{A})\hat{\bm{\beta}}_0+\left[\bm{I}_d-\prod_{i=1}^{t}(\bm{I}_d-\alpha_i\bm{A})\right]\hat{\bm{\beta}}^{\rm LS},
\end{equation}
where $\bm{A}=\bm{M}^{-1}\bm{X}^T\bm{X}$.
We consider the $A$-optimal Hessian sketch in order to specify a deterministic preconditioner $\bm{M}$. Similarly, we can obtain the estimator of Hessian sketch and its covariance matrix based on an optimality criterion,
\begin{eqnarray*}
\hat{\bm{\beta}}^{\rm HS}(\bm{\delta})&=&\left(\frac{1}{m}\sum_{i=1}^{n}\delta_i\bm{x}_i\bm{x}_i^T\right)^{-1}\left(\frac{1}{n}\sum_{i=1}^{n}\bm{x}_iy_i\right)\label{AOPT-HS},\\
{\rm cov}(\hat{\bm{\beta}}^{\rm HS}(\bm{\delta}))&=&\sigma^2\bm{M}^{-1}(\bm{\delta})\bm{X}^T\bm{X}\bm{M}^{-1}(\bm{\delta}),
\end{eqnarray*}
where $\bm{M}(\bm{\delta})=n/m\sum_{i=1}^{n}\delta_i\bm{x}_i\bm{x}_i^T$.
The next theorem provides a \revise{upper} bound for the trace of the covariance matrix.
\revise{
\begin{theorem}\label{aopthsthm}
	Under the same conditions of Theorem \ref{aoptcsthm}, for any $\bm{\delta}\in\mathcal{S}$, we have 
	\begin{eqnarray*}
	{\rm Tr}[\bm{M}^{-1}(\bm{\delta})\bm{X}^T\bm{X}\bm{M}^{-1}(\bm{\delta})]
	\leq\frac{\kappa(\bm{Q})}{\lambda_{\min}(\bm{Q})}\left[d+\frac{\kappa(\bm{Q})}{C}\sum_{i=1}^{n}(1-\delta_i)\|\bm{x}_i\|^2_2\right]^2.
	\end{eqnarray*} 
\end{theorem}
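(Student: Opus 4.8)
The plan is to reduce everything to the bound on ${\rm Tr}[\bm{M}^{-1}(\bm{\delta})]$ already established in Theorem~\ref{aoptcsthm}, which is available since the hypotheses are identical. Writing $\bm{M}=\bm{M}(\bm{\delta})$ and $\bm{Q}=\bm{X}^T\bm{X}$, the first step is to use the cyclic property of the trace together with the symmetry of $\bm{M}^{-1}$ to rewrite the target quantity as
\begin{equation*}
{\rm Tr}[\bm{M}^{-1}\bm{Q}\bm{M}^{-1}]={\rm Tr}[\bm{Q}\bm{M}^{-2}].
\end{equation*}
This puts the expression in a form where the factors $\bm{Q}$ and $\bm{M}^{-2}$ are decoupled and each can be controlled separately.

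Next I would peel off the $\bm{Q}$ factor. Since $\bm{Q}\preceq\lambda_{\max}(\bm{Q})\bm{I}_d$ in the Loewner order, conjugating by $\bm{M}^{-1}$ gives $\bm{M}^{-1}\bm{Q}\bm{M}^{-1}\preceq\lambda_{\max}(\bm{Q})\bm{M}^{-2}$, and monotonicity of the trace on positive semidefinite matrices yields ${\rm Tr}[\bm{Q}\bm{M}^{-2}]\leq\lambda_{\max}(\bm{Q})\,{\rm Tr}[\bm{M}^{-2}]$. The remaining task is to relate ${\rm Tr}[\bm{M}^{-2}]$ to $\big({\rm Tr}[\bm{M}^{-1}]\big)^2$. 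Denoting the eigenvalues of $\bm{M}^{-1}$ by $\mu_1,\dots,\mu_d\geq0$, I would simply observe that
\begin{equation*}
{\rm Tr}[\bm{M}^{-2}]=\sum_{j=1}^d\mu_j^2\leq\Big(\sum_{j=1}^d\mu_j\Big)^2=\big({\rm Tr}[\bm{M}^{-1}]\big)^2,
\end{equation*}
the inequality holding because every cross term $\mu_i\mu_j$ is nonnegative. This is precisely the step that produces the square appearing in the statement.

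Finally I would substitute the bound of Theorem~\ref{aoptcsthm} for ${\rm Tr}[\bm{M}^{-1}]$ and collect the scalar constants, obtaining
\begin{equation*}
{\rm Tr}[\bm{M}^{-1}\bm{Q}\bm{M}^{-1}]\leq\frac{\lambda_{\max}(\bm{Q})}{\lambda_{\min}^2(\bm{Q})}\left[d+\frac{\kappa(\bm{Q})}{C}\sum_{i=1}^{n}(1-\delta_i)\|\bm{x}_i\|_2^2\right]^2,
\end{equation*}
after which the prefactor simplifies through $\lambda_{\max}(\bm{Q})/\lambda_{\min}^2(\bm{Q})=\kappa(\bm{Q})/\lambda_{\min}(\bm{Q})$ to match the claimed bound exactly. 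I expect no genuine obstacle here; the proof is essentially an algebraic corollary of Theorem~\ref{aoptcsthm}. The only points requiring care are justifying the trace step rigorously through the Loewner ordering $\bm{Q}\preceq\lambda_{\max}(\bm{Q})\bm{I}_d$ rather than asserting ${\rm Tr}[AB]\leq\lambda_{\max}(A){\rm Tr}(B)$ without comment, and the bookkeeping that merges $\lambda_{\max}$ and $\lambda_{\min}^{-2}$ into the condition number. A secondary point worth a sentence is confirming that the matrix $\bm{M}(\bm{\delta})$ and the constant $C$ are taken exactly as in Theorem~\ref{aoptcsthm}, so that the invoked bound on ${\rm Tr}[\bm{M}^{-1}]$ transfers verbatim.
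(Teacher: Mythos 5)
Your proof is correct and takes essentially the same route as the paper's: the paper likewise bounds ${\rm Tr}[\bm{M}^{-1}(\bm{\delta})\bm{X}^T\bm{X}\bm{M}^{-1}(\bm{\delta})]\leq\lambda_{\max}(\bm{Q})\,{\rm Tr}[\bm{M}^{-2}(\bm{\delta})]\leq\lambda_{\max}(\bm{Q})\,\bigl({\rm Tr}[\bm{M}^{-1}(\bm{\delta})]\bigr)^2$ and then invokes Theorem~\ref{aoptcsthm}. Your write-up simply makes explicit the justifications (trace cyclicity, the Loewner ordering $\bm{Q}\preceq\lambda_{\max}(\bm{Q})\bm{I}_d$, nonnegative eigenvalue cross terms, and the prefactor bookkeeping) that the paper leaves implicit.
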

\begin{proof}
	Note that 
	\begin{eqnarray*}
	{\rm Tr}[\bm{M}^{-1}(\bm{\delta})\bm{X}^T\bm{X}\bm{M}^{-1}(\bm{\delta})]
	&\leq & \lambda_{\max}(\bm{Q}){\rm Tr}[\bm{M}^{-2}(\bm{\delta})]\\
	&\leq & \lambda_{\max}(\bm{Q}){\rm Tr}[\bm{M}^{-1}(\bm{\delta})]^2.
	\end{eqnarray*}
	Therefore the above inequality can be easily obtained by applying Theorem \ref{aoptcsthm}.
	\qed
\end{proof}
}
\revise{Theorem \ref{aopthsthm} shows that the $A$-optimal design of Hessian sketched estimator can also be approximated via finding the samples with largest $\ell_2$ norm. The samples selected for initialization can be recycled for constructing the preconditioner.} We find that adding a ridge term can further improve our preconditioner
\begin{equation}\label{NNN2}
\bm{M}(\bm{\delta},\lambda)=\frac{n}{m}\sum_{i=1}^{n}\delta_i\bm{x}_i\bm{x}_i^T+\lambda\bm{I}_d.
\end{equation}
The rationale is demonstrated by Figure \ref{fig-aopths}. 
Note that the effectiveness of the precondtioner $\bm{M}$ is measured by \revise{$\kappa(\bm{M}^{-1}\bm{X}^T\bm{X})=\kappa(\bm{M}^{-1/2}\bm{X}^T\bm{X}\bm{M}^{-1/2})=\kappa(\bm{B})$}  \revise{\citep{benzi2002preconditioning}}, which should be close to 1. And $\kappa(\bm{B})$ can be visualized by the contour plot of $\tilde{f}(\bm{\eta})$ in the two-dimensional cases. Specifically, the flat degree of the contour plot indicates the size of the condition number. The larger the condition number, the more circular the contour plot. The optimal transformation is $\bm{M}=\bm{X}^T\bm{X}$ as shown by Figure \ref{fig-aopths}(b). Observing Figures 2(a) and \ref{fig-aopths}(c), $\kappa(\bm{B})$ is still close to the $\kappa(\bm{X}^T\bm{X})$, i.e., the preconditioner fails. After ridging, our preconditioner shrinks the \emph{major radius} in Figure \ref{fig-aopths}(c) and render the transformation to reach optimality as shown by Figure \ref{fig-aopths}(d).

\begin{figure}[ht!]
	\setcounter{subfigure}{0}
	\centering
	\subfigure[Original Space]{
	\begin{minipage}[b]{0.4\linewidth}
		\includegraphics[width=\linewidth]{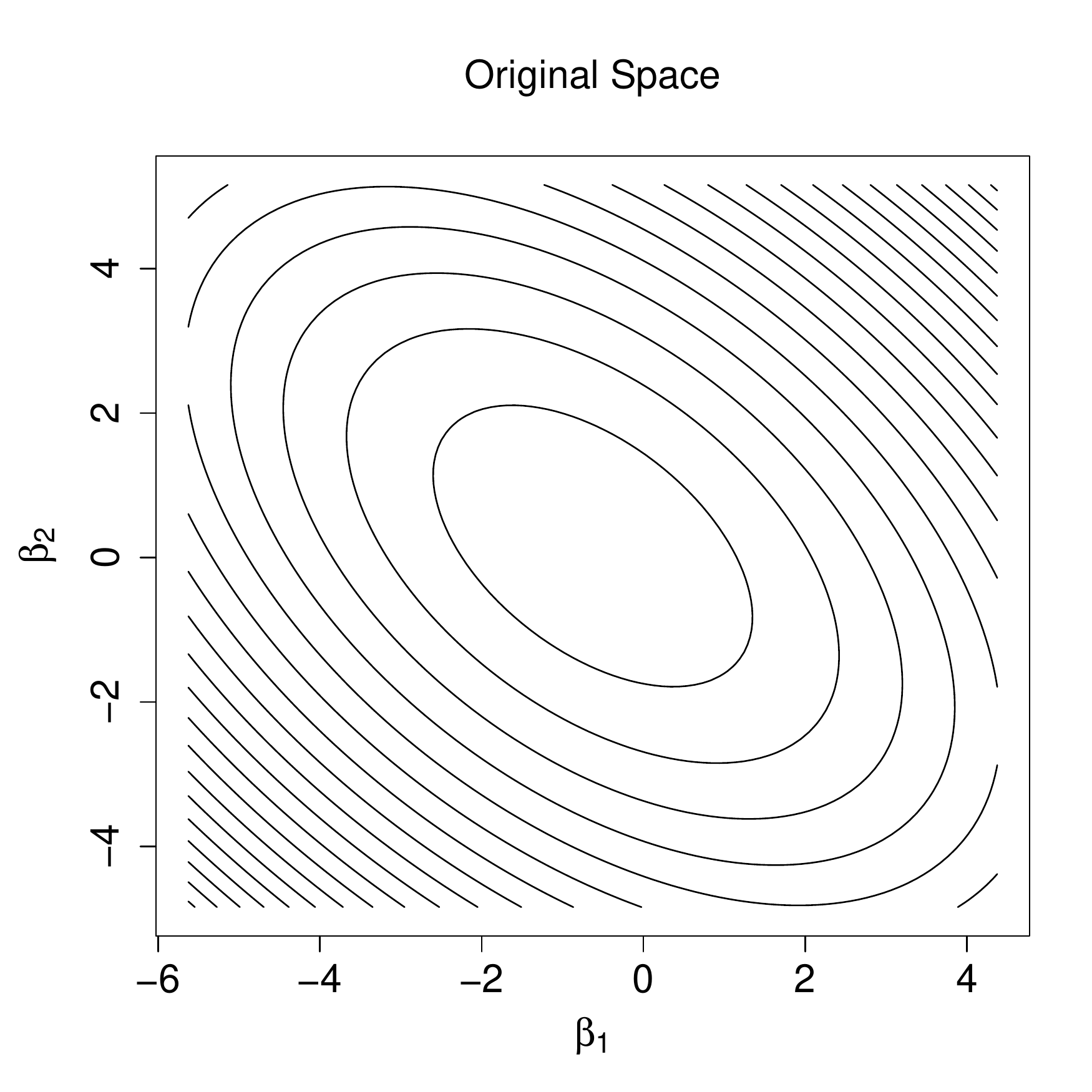}
	\end{minipage}}
	\subfigure[Optimal]{
	\begin{minipage}[b]{0.4\linewidth}
		\includegraphics[width=\linewidth]{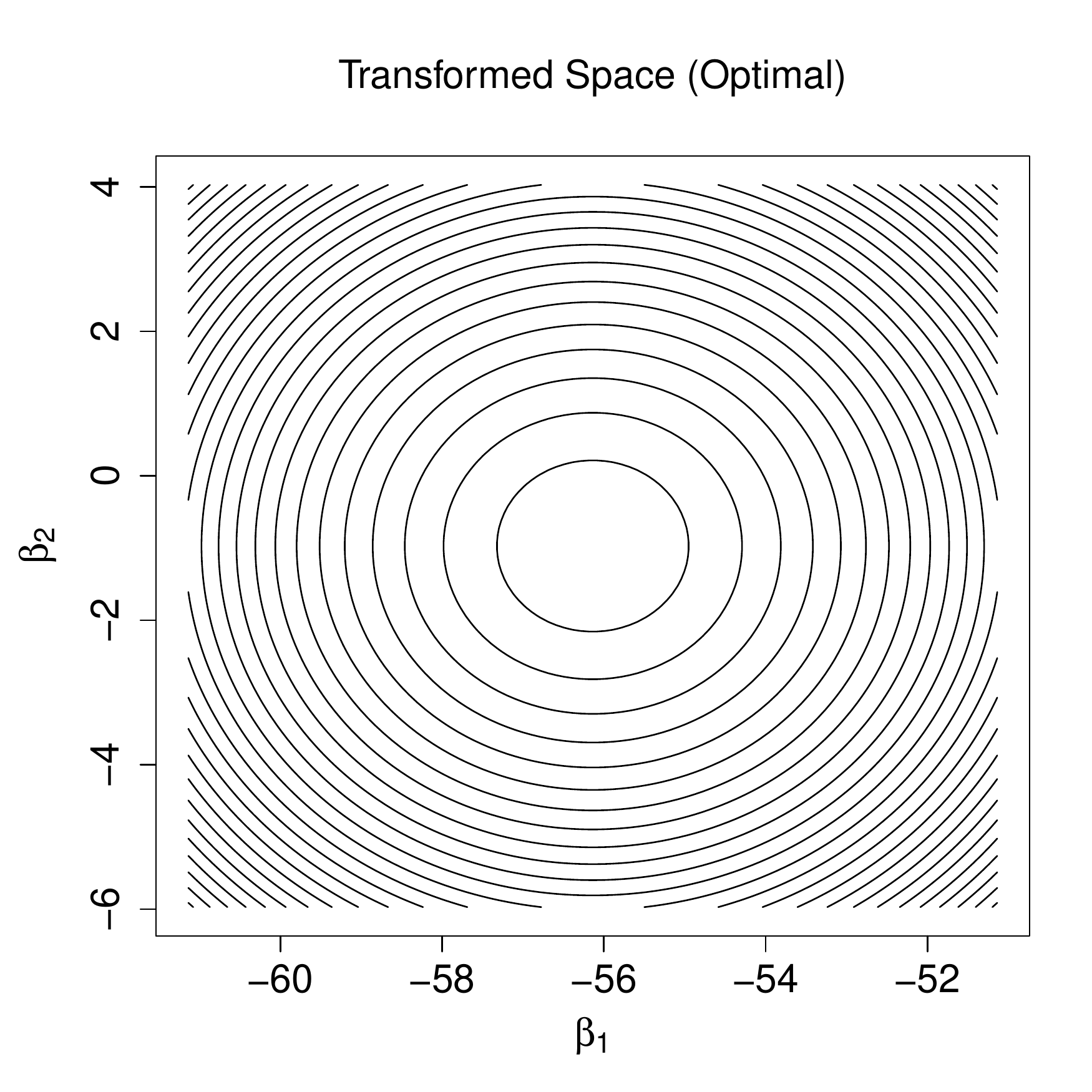}
	\end{minipage}}
	\subfigure[$\lambda=0$]{
	\begin{minipage}[b]{0.4\linewidth}
		\includegraphics[width=\linewidth]{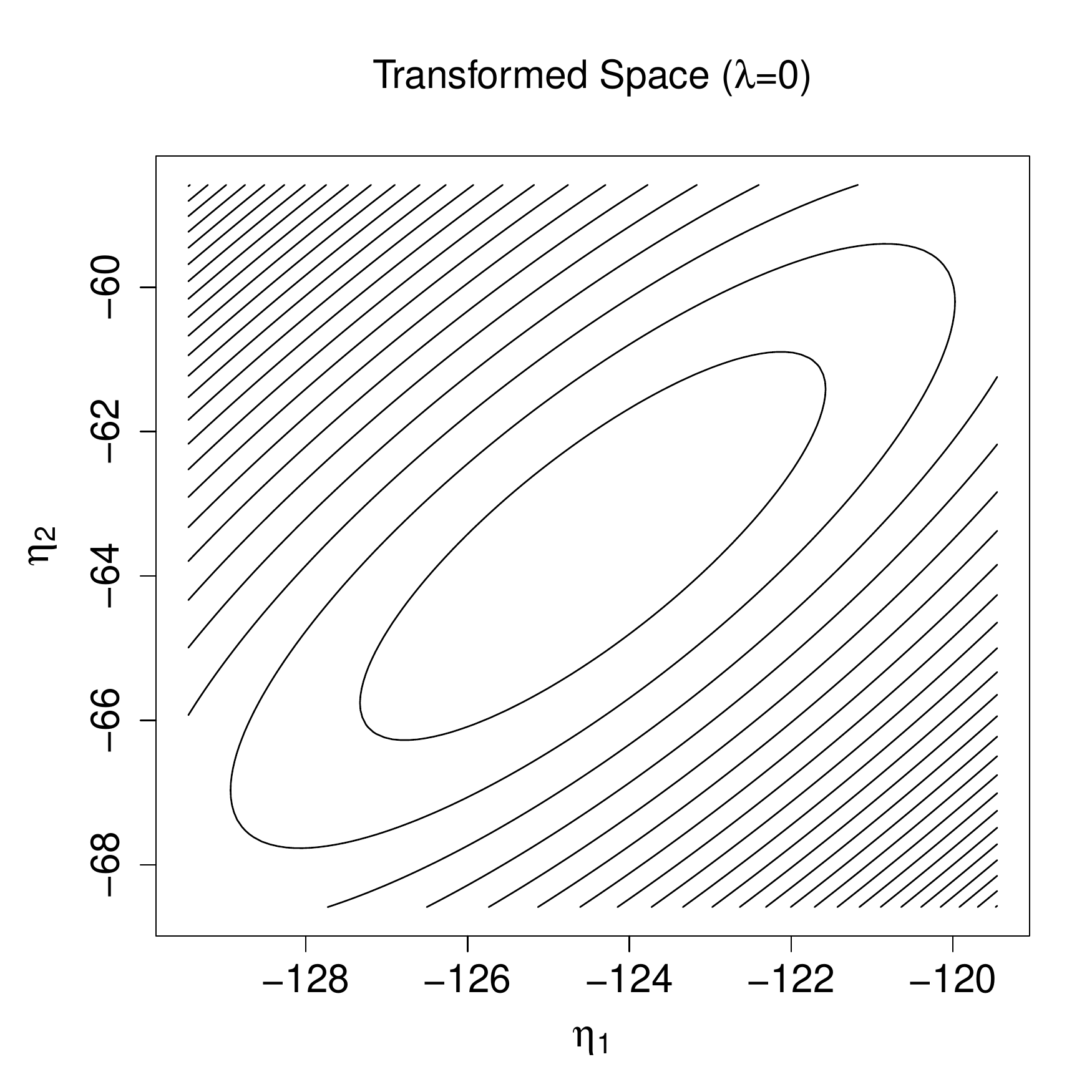}
	\end{minipage}}
	\subfigure[$\lambda=20228$]{
	\begin{minipage}[b]{0.4\linewidth}
		\includegraphics[width=\linewidth]{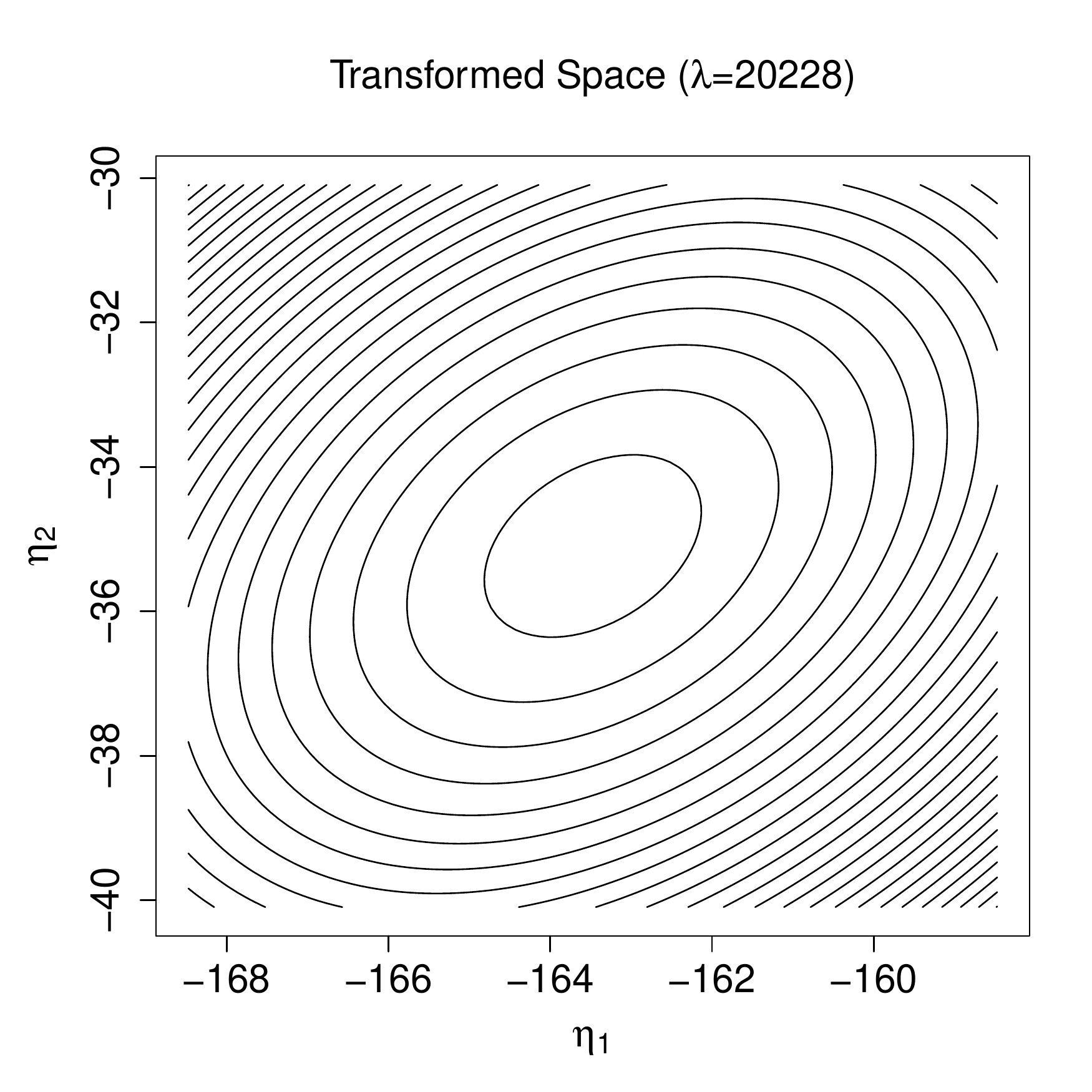}
	\end{minipage}}
	\begin{mycaption}\label{fig-aopths}
		The contour plots of $\tilde{f}(\bm{\eta})=\frac{1}{2}\|\tilde{\bm{X}}\bm{\eta}-\bm{y}\|^2_2$ where $\tilde{\bm{X}}=\bm{X}\bm{M}^{-1/2}$. (a) $\bm{M}=\bm{I}_d$, (b) $\bm{M}=\bm{X}^T\bm{X}$, (c) and (d)  $\bm{M}(\bm{\delta},\lambda)=n/m\sum_{i=1}^{n}\delta_i\bm{x}_i\bm{x}_i^T+\lambda\bm{I}_d$ with $\lambda=0$ and $\sum_{i=1}^{n}\|\bm{x}_i\|^2_2$ respectively.
	\end{mycaption}
\end{figure}

\subsection{Exact Line Search}
Using the improved preconditioner of the form (\ref{NNN2}), the adaptive first-order IHS estimator (\ref{NNN}) can be rewritten as 
\begin{equation}\label{NNN3}
\hat{\bm{\beta}}_t = \hat{\bm{\beta}}_{t-1}+\alpha_t\bm{M}^{-1}(\bm{\delta},\lambda)\bm{X}^T(\bm{y}- \bm{X} \hat{\bm{\beta}}_{t-1}),
\end{equation}
for $t=1,2,\ldots,N$.  To determine $\alpha_t$ adaptively, we note that it actually corresponds to the learning rate of the first order method in the transformed space. Recall that the learning rate in (\ref{trans}) is fixed as unit 1, which may be suboptimal and does not lead to the largest descent for every iteration. We hence fulfill the potential of IHS by exact line search. Specifically, let $$\bm{d}_t=\bm{M}^{-1/2}\bm{X}^T(\bm{y}-\bm{X}\bm{M}^{-1/2}\hat{\bm{\eta}}_{t-1})=-\nabla\tilde{f}(\hat{\bm{\eta}}_{t-1})$$ be the update direction at the $t$th iteration, the optimal step lengths are determined by adaptively minimizing the univariate function $\psi(\alpha_t)=\tilde{f}(\hat{\bm{\eta}}_{t-1}+\alpha_{t}\bm{d}_t)$. It is easy to show that
\begin{eqnarray*}
\alpha_{t}&=&\frac{\nabla\tilde{f}^T(\hat{\bm{\eta}}_{t-1})\nabla\tilde{f}(\hat{\bm{\eta}}_{t-1})}{\nabla\tilde{f}^T(\hat{\bm{\eta}}_{t-1})\bm{M}^{-1/2}\bm{X}^T\bm{X}\bm{M}^{-1/2}\nabla\tilde{f}(\hat{\bm{\eta}}_{t-1})}\nonumber\\
&=&\frac{\nabla f^T(\hat{\bm{\beta}}_{t-1})\bm{M}^{-1}\nabla f(\hat{\bm{\beta}}_{t-1})}{\nabla f^T(\hat{\bm{\beta}}_{t-1})\bm{M}^{-1}\bm{X}^T\bm{X}\bm{M}^{-1}\nabla f(\hat{\bm{\beta}}_{t-1})},
\end{eqnarray*}
where $f(\bm{\beta})=1/2\|\bm{X}\bm{\beta}-\bm{y}\|_2^2$.
Moreover, with the above $\{\alpha_{t}\}_{t=1}^N$, the first-order method becomes steepest descent in the transformed space with the guaranteed convergence; \revise{see Theorem 3.3 in \cite{Nocedal2006Numerical}}. Thus it also converges in the original space \revise{by the one-to-one mapping between $\hat{\bm{\eta}}_t$ and $\hat{\bm{\beta}}_t$.}

 In summary, we present the adaptive $A$-optimal IHS in Algorithm \ref{algo-AoptIHS}. \revise{The algorithm uses a pre-specified loop number $N$, and it can be modified by early stopping strategy, e.g. when the difference between  $\hat{\bm{\beta}}_t$ and  $\hat{\bm{\beta}}_{t-1}$ is below a tolerance threshold.}

 \begin{algorithm}[ht]\label{algo-AoptIHS}
 	\caption{$A$-Optimal IHS Algorithm}
 	\LinesNumbered
 	\KwIn{Data $(\bm{X},\bm{y})$, subsample size $m$, iteration times $N$, parameter $\lambda$ of preconditioner.}
 	Initialize $\hat{\bm{\beta}}_0$ by Algorithm \ref{algo-aoptcs} and obtain $\bm{\delta}$. \\
 	Compute $\bm{M}=\frac{n}{m}\sum_{i=1}^n\delta_i\bm{x}_i\bm{x}_i^T+\lambda\bm{I}_d$.\\
 	\For{$t=1, \dots,N$}{
 		$\bm{v}_t=\bm{X}^T(\bm{y}-\bm{X}\hat{\bm{\beta}}^{t-1})$.\\
 		$\bm{u}_t=\bm{M}^{-1}\bm{v}_t$.\\
 		$\bm{p}_t=\bm{X}
 		\bm{u}_t$.\\
 		$\alpha_t=\frac{\bm{v}_t^T\bm{u}_t}{\bm{p}_t^T\bm{p}_t}$.\\
 		$\hat{\bm{\beta}}_{t}=\hat{\bm{\beta}}_{t-1}+\alpha_t\bm{u}_t$.
 	}
 	\KwResult{$\hat{\bm{\beta}}=\hat{\bm{\beta}}_N$.}
 \end{algorithm}
 

\section{Numerical Experiments}

\subsection{Simulated Data Analysis}
\paragraph{\textbf{Data Generation.}}
We generate data by following the experimental setups of \cite{wang2018information}. All data are generated from the linear model with the true parameter $\bm{\beta}^*$ being $d$ i.i.d $N(0,1)$ variates  and $\sigma^2=9$. Let $\bm{\Sigma}$ be the covariance matrix where its $(i, j)$ entry is $\Sigma_{ij}=0.5^{I(i\neq j)}$ for $i,j \in [d]$, and $I(\cdot)$ is the indicator function. We consider four multivariate distributions for covariates $\{\bm{x}_i\}_{i=1}^n$.
\begin{enumerate}
	\item[(1)] A multivariate normal distribution $N(\bm{0},\bm{\Sigma})$.
	\item[(2)] A multivariate log-normal distribution 
	$\mbox{\rm LN}(\bm{0},\bm{\Sigma})$ \revise{which is generated by taking the exponential transformation of $N(\bm{0},\bm{\Sigma})$}.
	\item[(3)] A multivariate $t$ distribution with $2$ degrees of freedom $t_2(\bm{0},\bm{\Sigma})$.
	\item[(4)] A mixture distribution composed of five different distributions $N(\bm{1},\bm{\Sigma})$, $t_2(\bm{0},\bm{\Sigma})$, $t_3(\bm{0},\bm{\Sigma})$, $\mbox{\rm Unif}(\bm{0},\bm{2})$, $\mbox{\rm LN}(\revise{\bm{0}},\bm{\Sigma})$ with equal proportions, where $\mbox{\rm Unif}(\bm{0},\bm{2})$ represents $d$ elements from independent uniform distributions.
\end{enumerate}
To remove the effect of the intercept, we center the data. Unless particularly stated, we set $n=2^{17}, m=1000$ and present the results averaged over \revise{$R=1000$} replications.

\paragraph{\textbf{Choice of Preconditioner.}}
To assess the quality of the preconditioner $\bm{M}$ that is a positive definite matrix, we first define the $\Delta(\cdot)$ measure as the ratio of improvement with respect to the condition number,
$$
\Delta(\bm{M}) = 1-\frac{\kappa(\bm{M}^{-1/2}\bm{X}^T\bm{X}\bm{M}^{-1/2})}{\kappa(\bm{X}^T\bm{X})} = 1-\frac{\kappa(\bm{M}^{-1}\bm{X}^T\bm{X})}{\kappa(\bm{X}^T\bm{X})}.
$$
For a good preconditioner, its $\Delta$ measure should be positive and close to 1. In contrast, a preconditioner with negative $\Delta$ value would worsen the condition number $\kappa(\cdot)$. 

\revise{To choose the ridge parameter $\lambda$, we calculate the averaged $\Delta$ values on different distributions versus $\lambda$ as a proportion of $\sum_{i=1}^n\|\bm{x}_i\|^2_2$. Specifically, let $d \in \{50, 100\}$ and the proportion take values from $0.1$ to $1$ with step size $0.1$. The results are plotted in Figure \ref{fig-lambda}, which shows that the curves are similar on heavy-tailed distributions but not so consistent for the normal case. Through extensive experiments, as a rule of thumb, we suggest that $\lambda$ takes value $0.1\sum_{i=1}^{n}\|\bm{x}_i\|^2_2$ for relatively concentrated data and $0.4\sum_{i=1}^{n}\|\bm{x}_i\|^2_2$ for heavy-tailed distributions. 
\begin{figure}[ht!]
	\centering
	\setcounter{subfigure}{0}
	\subfigure[Normal]{
	\begin{minipage}[b]{0.4\linewidth}
			\centering
			\includegraphics[width=\linewidth]{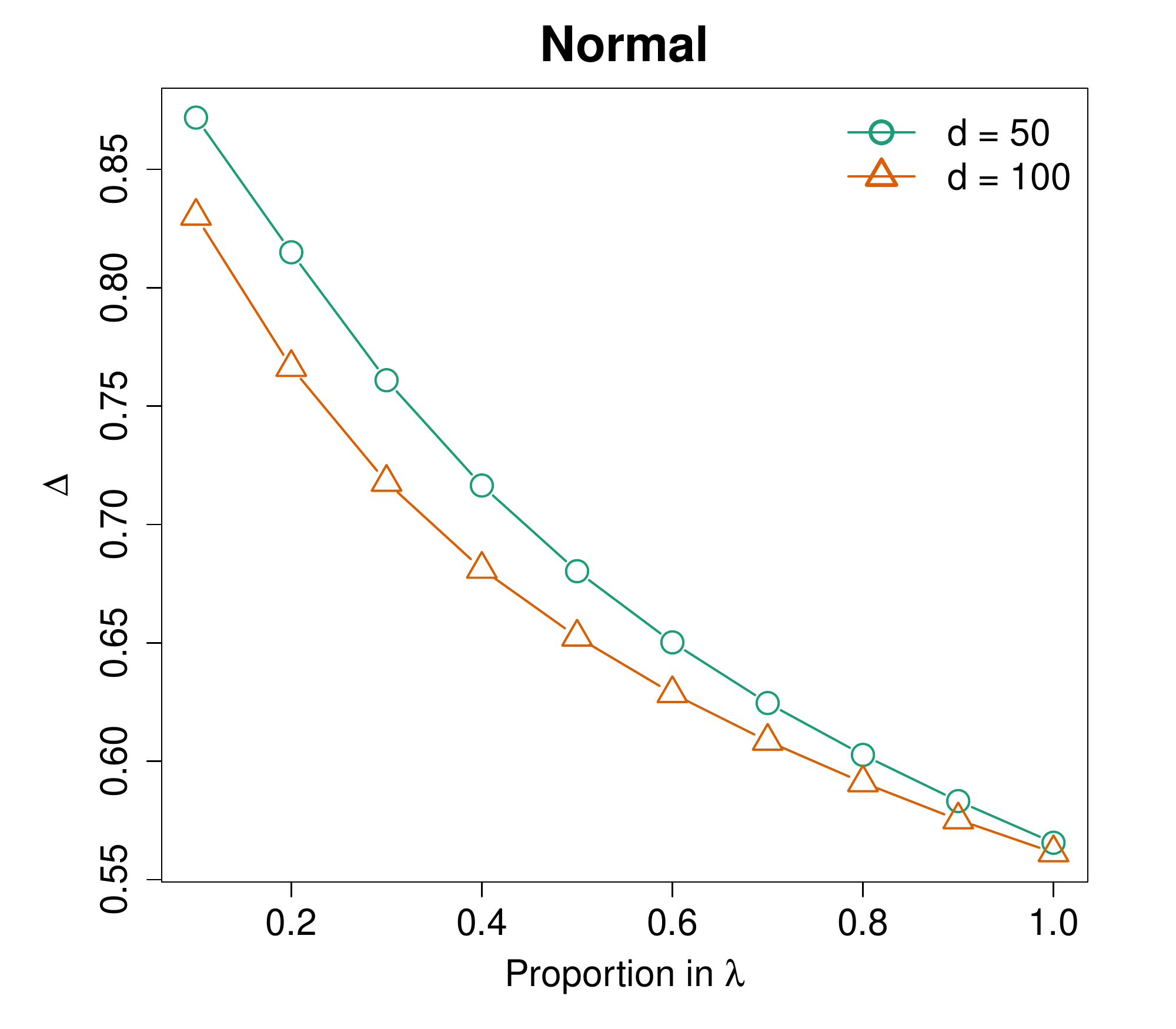}
	\end{minipage}}
	\subfigure[Log-Normal]{
	\begin{minipage}[b]{0.4\linewidth}
			\centering
			\includegraphics[width=\linewidth]{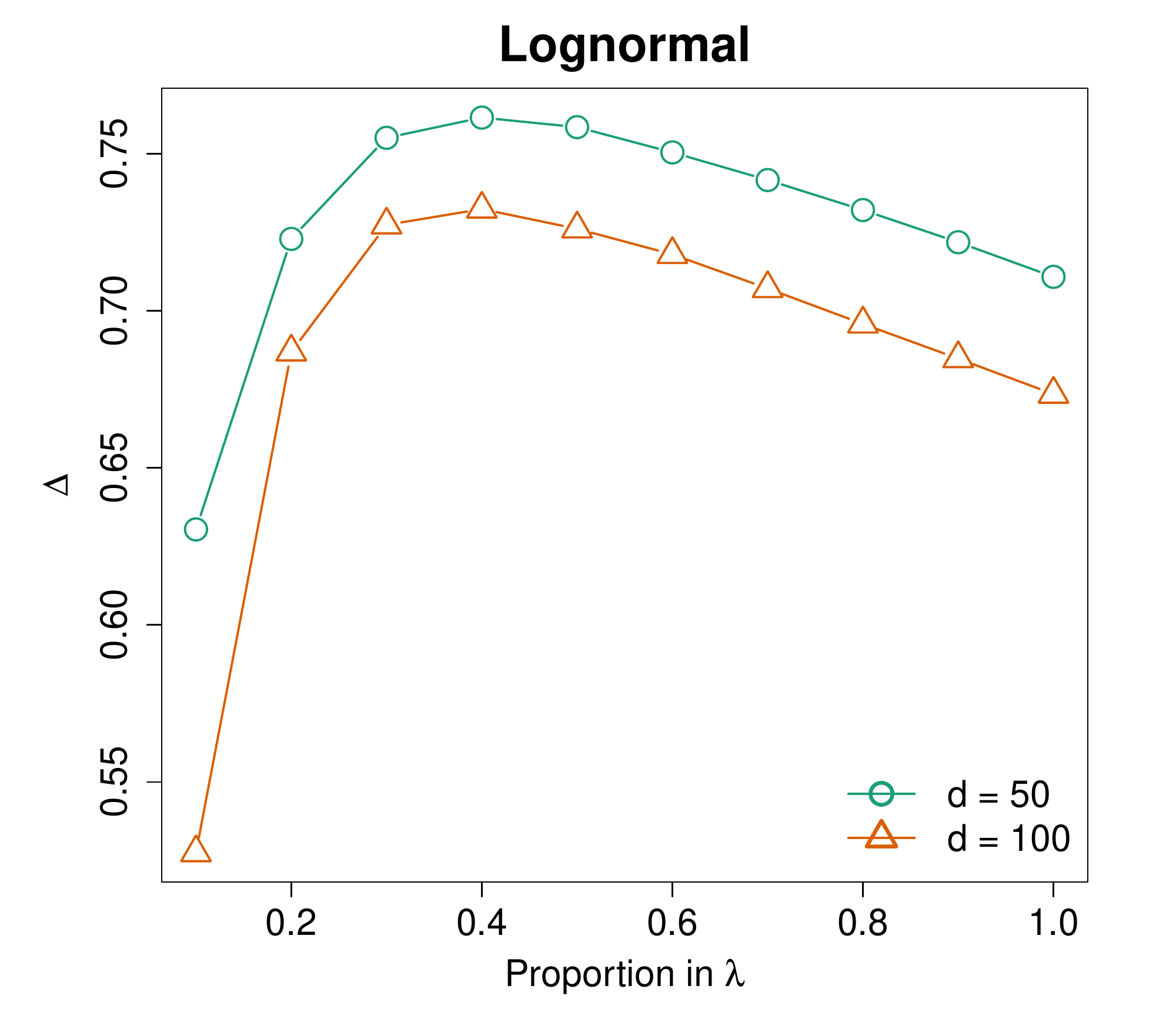}
	\end{minipage}}
	\subfigure[$t_2$]{
		\begin{minipage}[b]{0.4\linewidth}
		\centering
			\includegraphics[width=\linewidth]{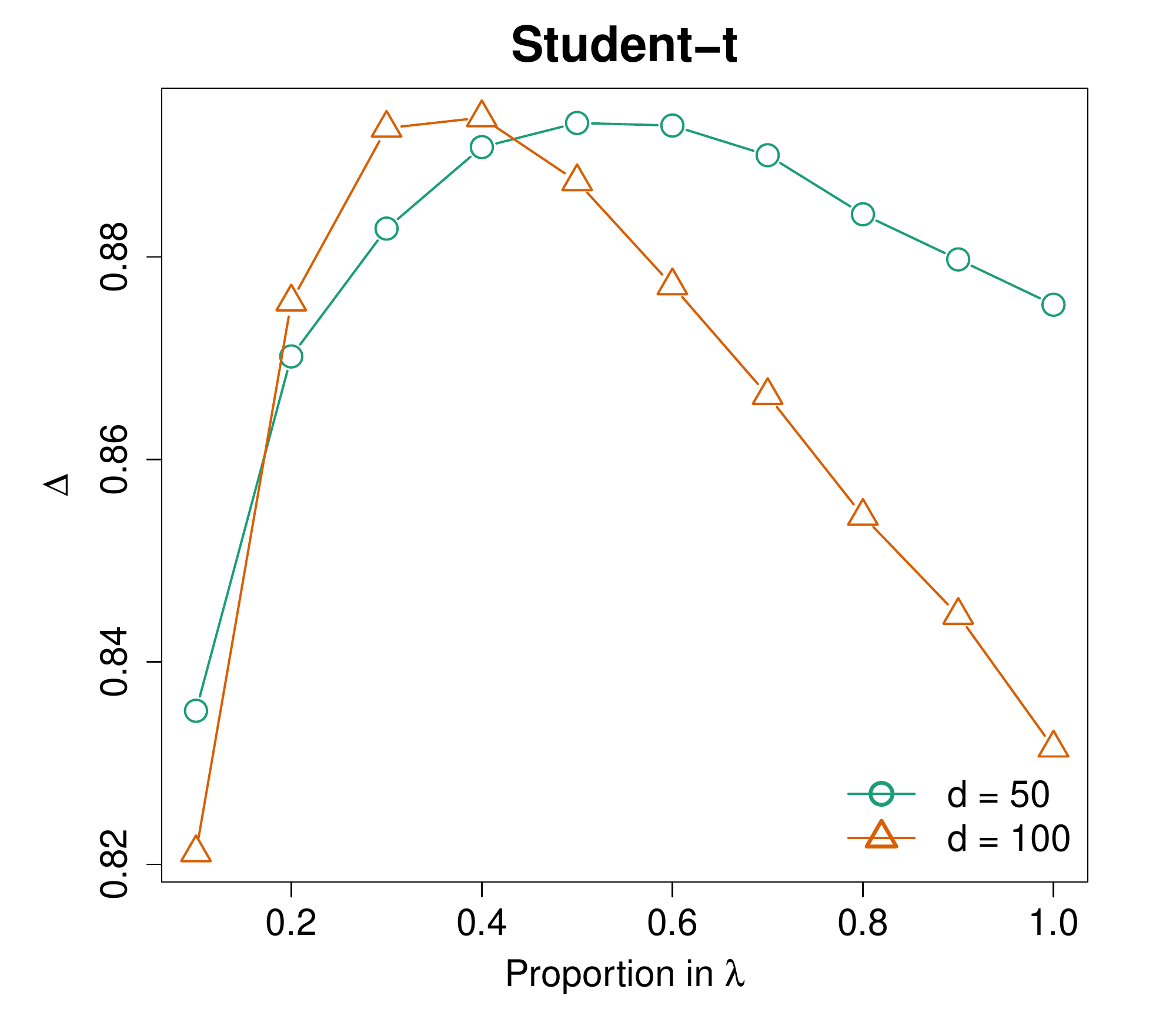}
	\end{minipage}}
	\subfigure[Mixture]{
		\begin{minipage}[b]{0.4\linewidth}
		\centering
			\includegraphics[width=\linewidth]{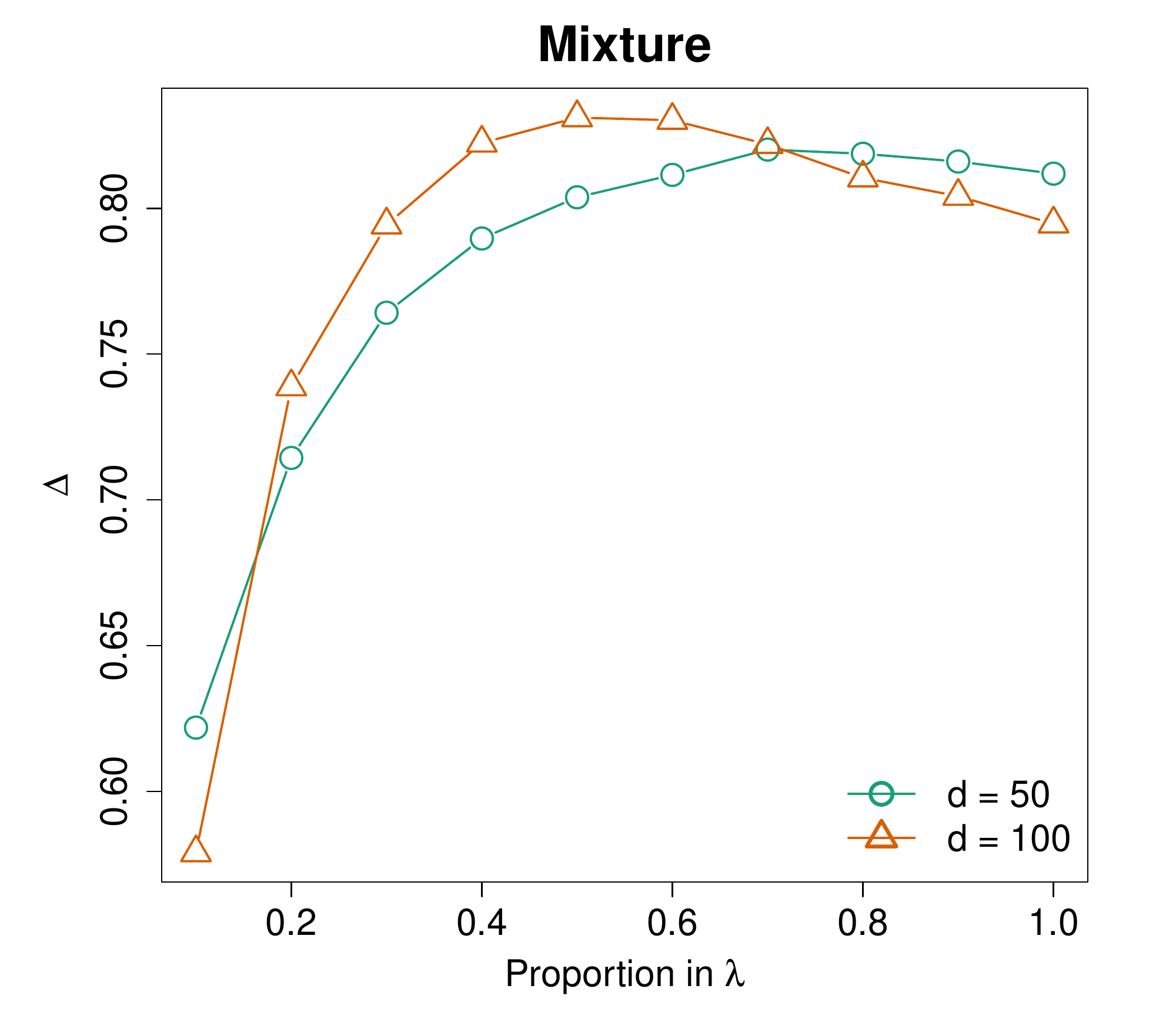}
	\end{minipage}}
	\begin{mycaption}\label{fig-lambda}
	Averaged $\Delta$-value versus $\lambda$-proportion of  $\sum_{i=1}^n\|\bm{x}_i\|^2_2$ on the ridge preconditioner $\bm{M}(\bm{\delta}, \lambda)= \frac{n}{m}\sum_{i=1}^{n}\delta_i\bm{x}_i\bm{x}_i^\top+\lambda\bm{I}_d$. The proportion varies from 0.1 to 1 with step size 0.1.
	\end{mycaption}
\end{figure}}

\revise{
Table~\ref{tb1} presents the averaged $\Delta$ values of the proposed preconditioner (\ref{NNN2}) for different distributions with dimensionality $d=50$ and  $d=100$, respectively.} We use $\tilde{\lambda}$ to denote $0.1\sum_{i=1}^{n}\|\bm{x}_i\|^2_2$ for the normal distribution and $0.4\sum_{i=1}^{n}\|\bm{x}_i\|^2_2$ for other cases. The $\Delta$ results are compared with $\lambda=0$ and the SRHT scheme. It is clear that the preconditioner with parameter $\tilde{\lambda}$ uniformly dominates in all cases. Such superiority is especially significant for the cases with $d=100$, i.e., when $d/m$ is relatively large. \revise{Furthermore, note that when $\bm{M}=\lambda\bm{I}_d$, $\kappa(\bm{M}^{-1/2}\bm{X}^T\bm{X}\bm{M}^{-1/2})=\kappa(\bm{X}^T\bm{X}/\lambda)=\kappa(\bm{X}^T\bm{X})$ and $\Delta=0$. Therefore the ridged combination indeed brings significant improvement on the preconditioner.} 

\begin{table}[ht!]
\centering
\caption{Averaged $\Delta$-values of the ridged preconditioner $\bm{M}(\lambda,\bm{\delta})=n/m\sum_{i=1}^{n}\delta_i\bm{x}_i\bm{x}_i^T+\lambda\bm{I}_d$ and SRHT. The $\tilde{\lambda}$ is chosen to be
$0.1\sum_{i=1}^{n}\|\bm{x}_i\|^2_2$ for the normal distribution and $0.4\sum_{i=1}^{n}\|\bm{x}_i\|^2_2$ for other cases. }\label{tb1}
\setlength{\tabcolsep}{15pt} 
\begin{tabular}{{l}*{5}{c}}
		\toprule
		  & Normal &
	Log-Normal &  $t_2$ &  Mixture \\
		\cmidrule(lr){2-5}
				& \multicolumn{4}{c}{$d=50$}\\

$\lambda = 0$        & -4.56  & 0.39       & 0.76  & 0.42    \\
$\lambda = \tilde{\lambda}$ & 0.87   & 0.76       & 0.89  & 0.79    \\
SRHT          & 0.55   & 0.52       & 0.75  & 0.70   \\

		& \multicolumn{4}{c}{$d=100$}\\
		
$\lambda = 0$        & -7.62  & -0.41      & 0.63  & -0.01   \\
$\lambda = \tilde{\lambda}$ & 0.83   & 0.73       & 0.90  & 0.82    \\
SRHT          & -0.04  & 0.05       & 0.46  & 0.32   \\
		\bottomrule
\end{tabular}
\end{table}

\paragraph{\textbf{Comparative Study.}}
We compare the proposed deterministic $A$-optimal IHS to the original randomized IHS \cite{pilanci2016iterative}, as well as two other improved IHS methods: acc-IHS \cite{wang2017sketching} and pwGradient \cite{wang2018large}. \reviseb{The initial estimators of these benchmark methods are set to be the vector zero as suggested in the original papers.} \revise{Two empirical MSE criteria are used to evaluate the algorithm:
\begin{eqnarray*}
{\rm MSE}_1(\hat{\bm{\beta}}_t)&=&\frac{1}{R}\sum_{i=1}^{R}\|\hat{\bm{\beta}}_{ti}-\bm{\beta}^*\|_2^2\\
{\rm MSE}_2(\hat{\bm{\beta}}_t)&=&\frac{1}{R}\sum_{i=1}^{R}\|\hat{\bm{\beta}}_{ti}-\hat{\bm{\beta}}^{\rm LS}_i\|_2^2
\end{eqnarray*}
where $\hat{\bm{\beta}}_{ti}$ denotes the estimator of the $t$th iteration in the $i$th round, and the trimmed mean with fraction 0.025 is applied to handle the effects of abnormal situations. \reviseb{We record MSEs from the $0$th iteration (i.e. initialization stage) and set the sketch dimension to be $m$ at every iteration.} Note that the randomized IHS requires independent sketch matrix for each iteration, while the other methods reuse the same sketch matrix for each iteration.}

\begin{figure}[ht!]
	\centering
	\setcounter{subfigure}{0}
	\subfigure[Normal]{
	\begin{minipage}[b]{0.4\linewidth}
			\includegraphics[width=\linewidth]{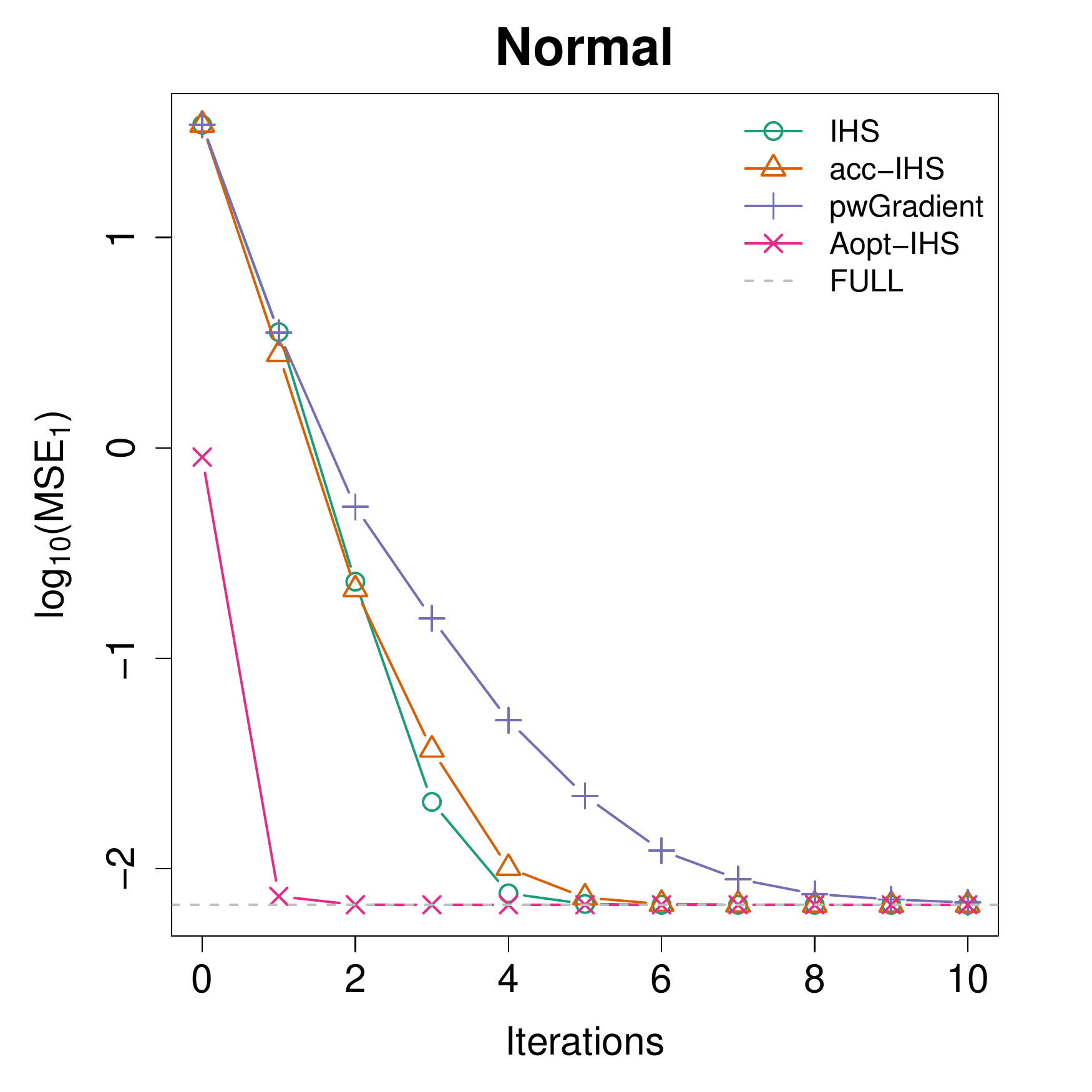}
	\end{minipage}}
	\subfigure[Log-Normal]{
		\begin{minipage}[b]{0.4\linewidth}
			\includegraphics[width=\linewidth]{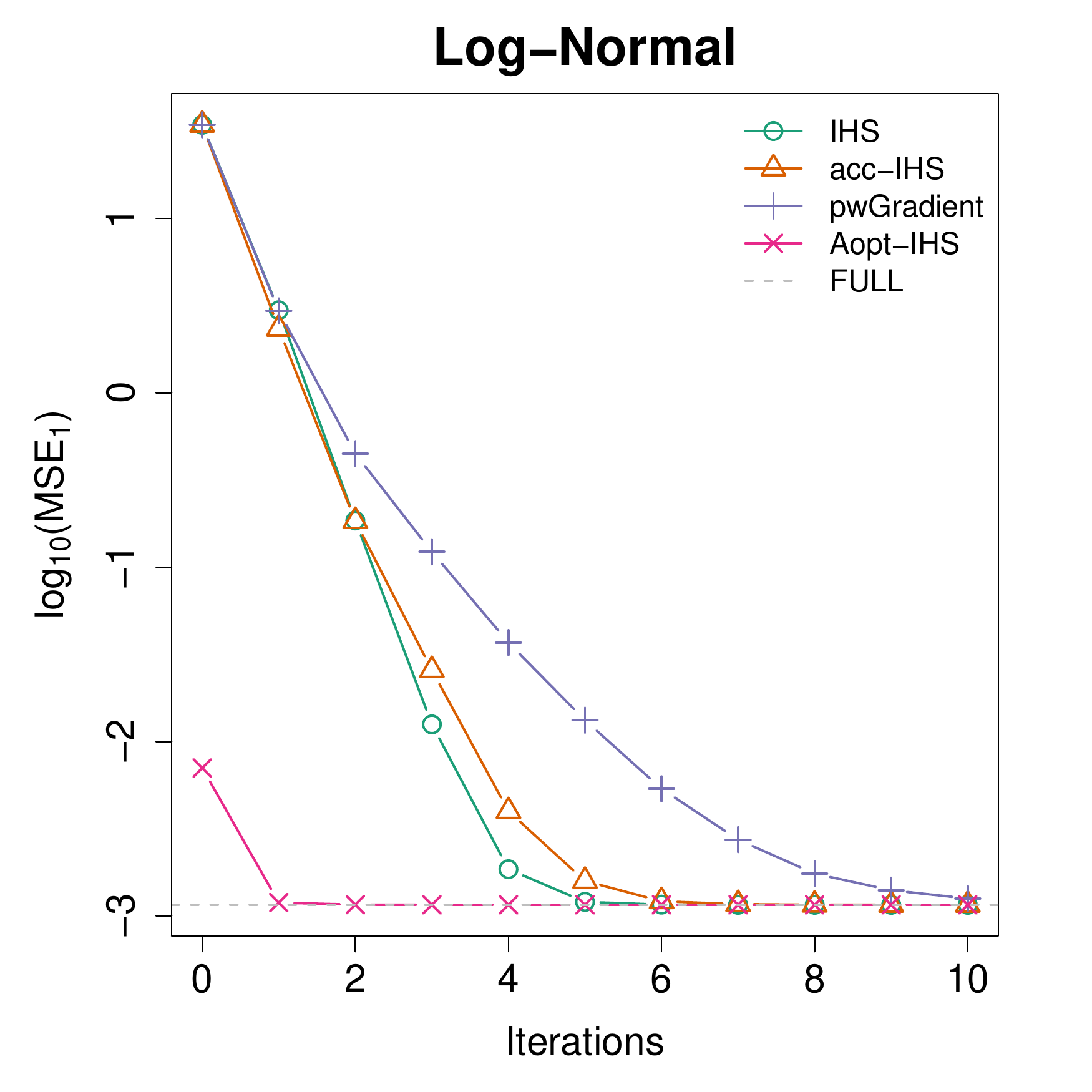}
	\end{minipage}}
	\subfigure[$t_2$]{
		\begin{minipage}[b]{0.4\linewidth}
			\includegraphics[width=\linewidth]{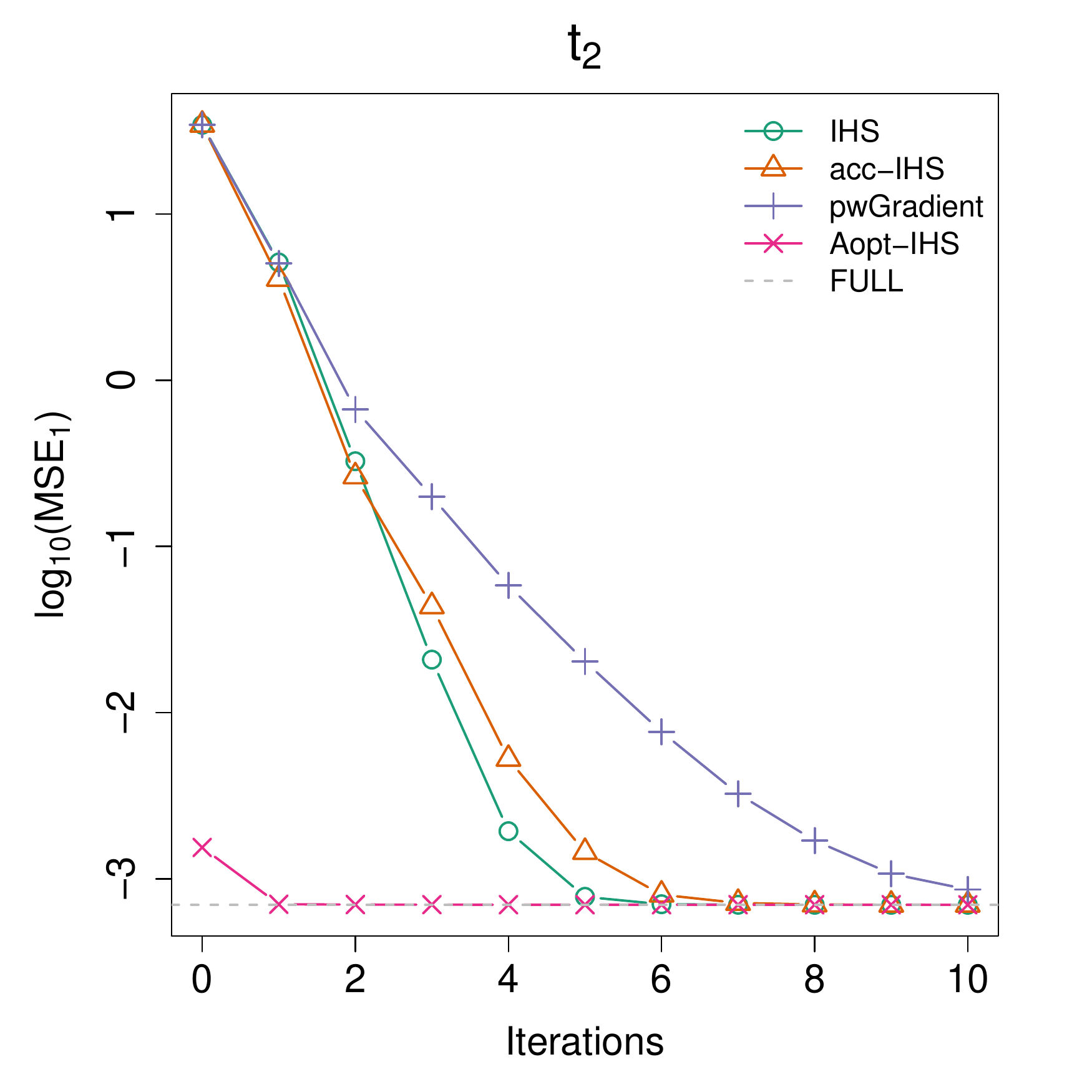}
	\end{minipage}}
	\subfigure[Mixture]{
		\begin{minipage}[b]{0.4\linewidth}
			\includegraphics[width=\linewidth]{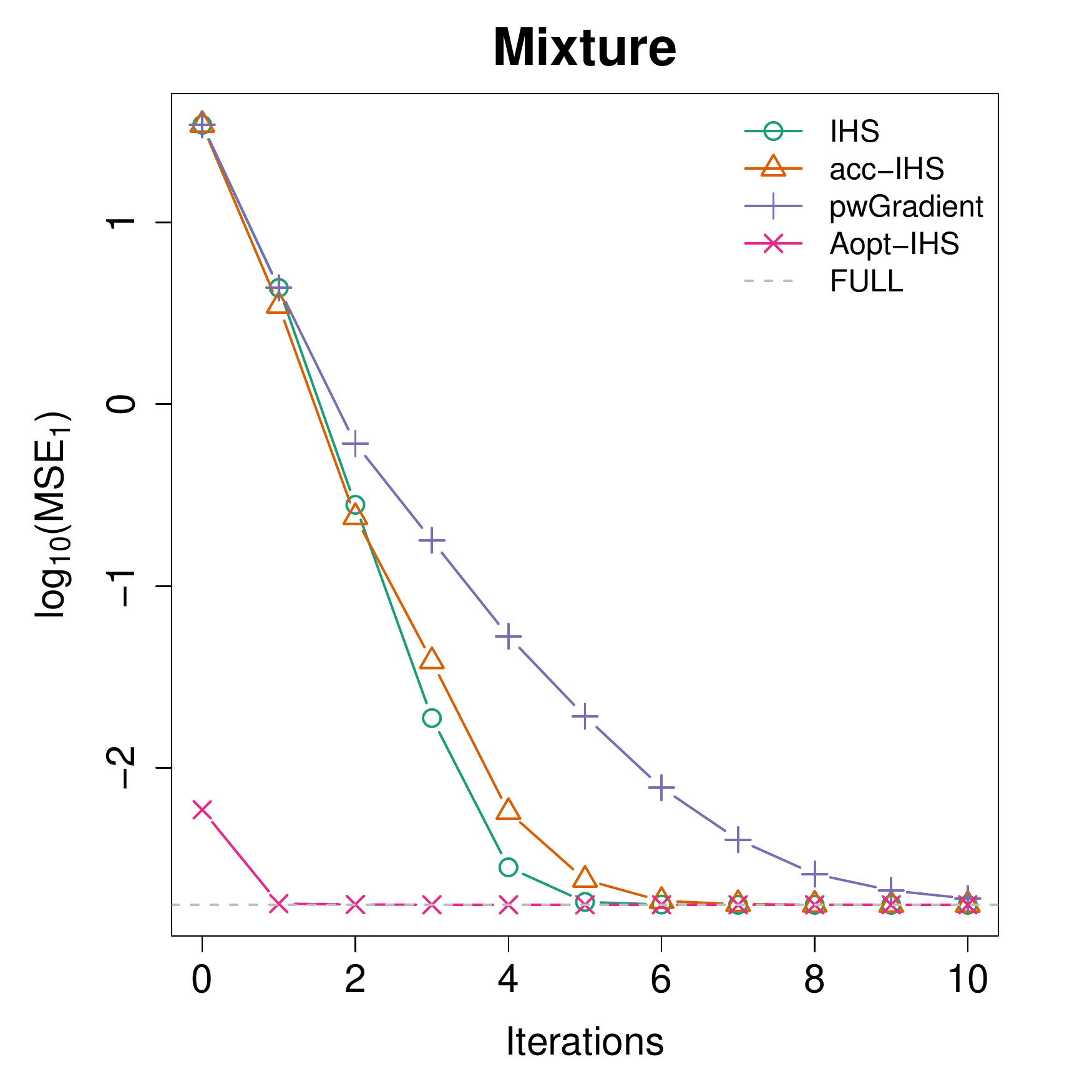}
	\end{minipage}}
	\begin{mycaption}\label{fig-true}
		Estimation error between the estimator and the ground truth versus iteration number $N$ when $d=50$.
	\end{mycaption}
\end{figure}

\begin{figure}[ht!]
	\centering
	\setcounter{subfigure}{0}
	\subfigure[Normal]{
		\begin{minipage}[b]{0.4\linewidth}
			\includegraphics[width=\linewidth]{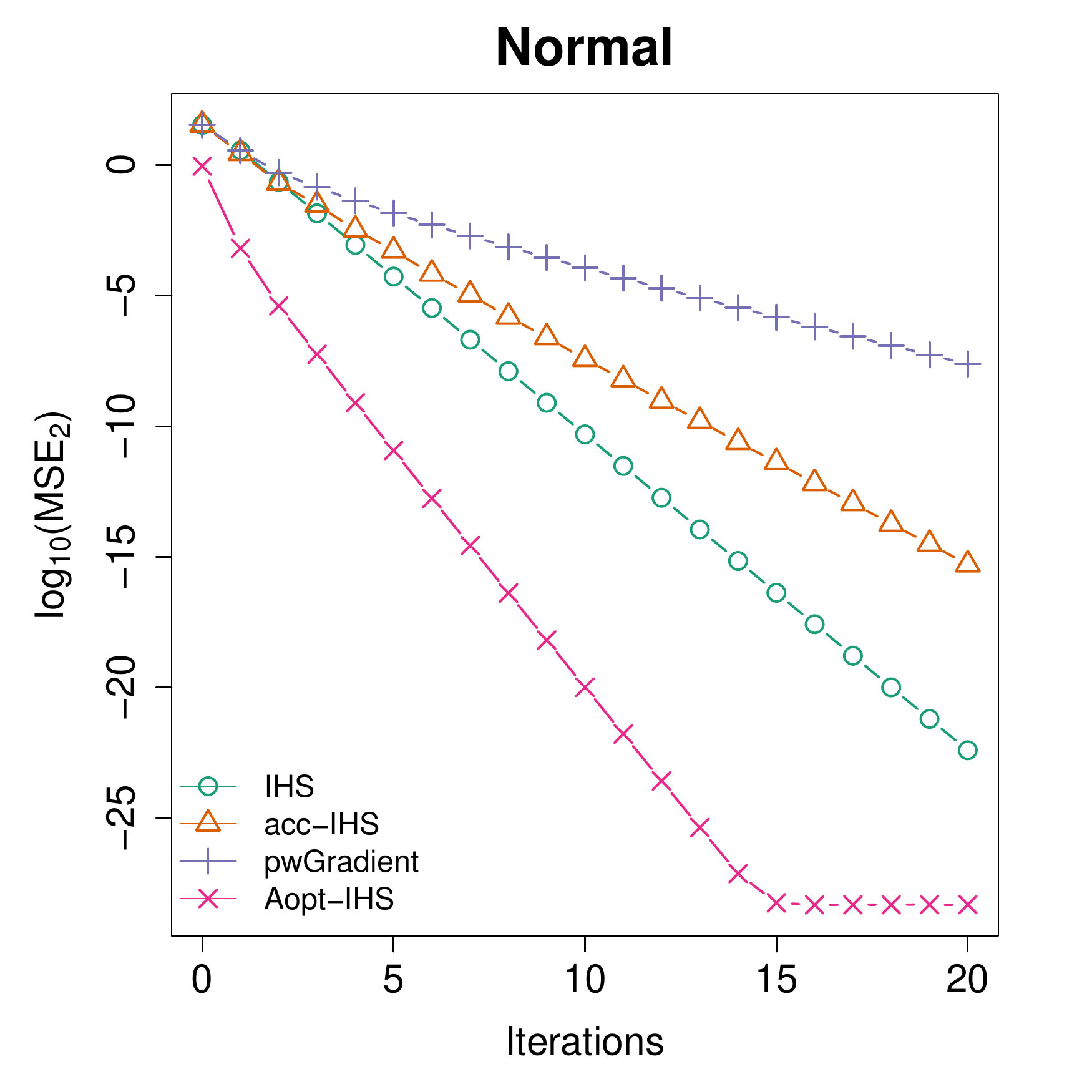}
	\end{minipage}}
	\subfigure[Log-Normal]{
		\begin{minipage}[b]{0.4\linewidth}
			\includegraphics[width=\linewidth]{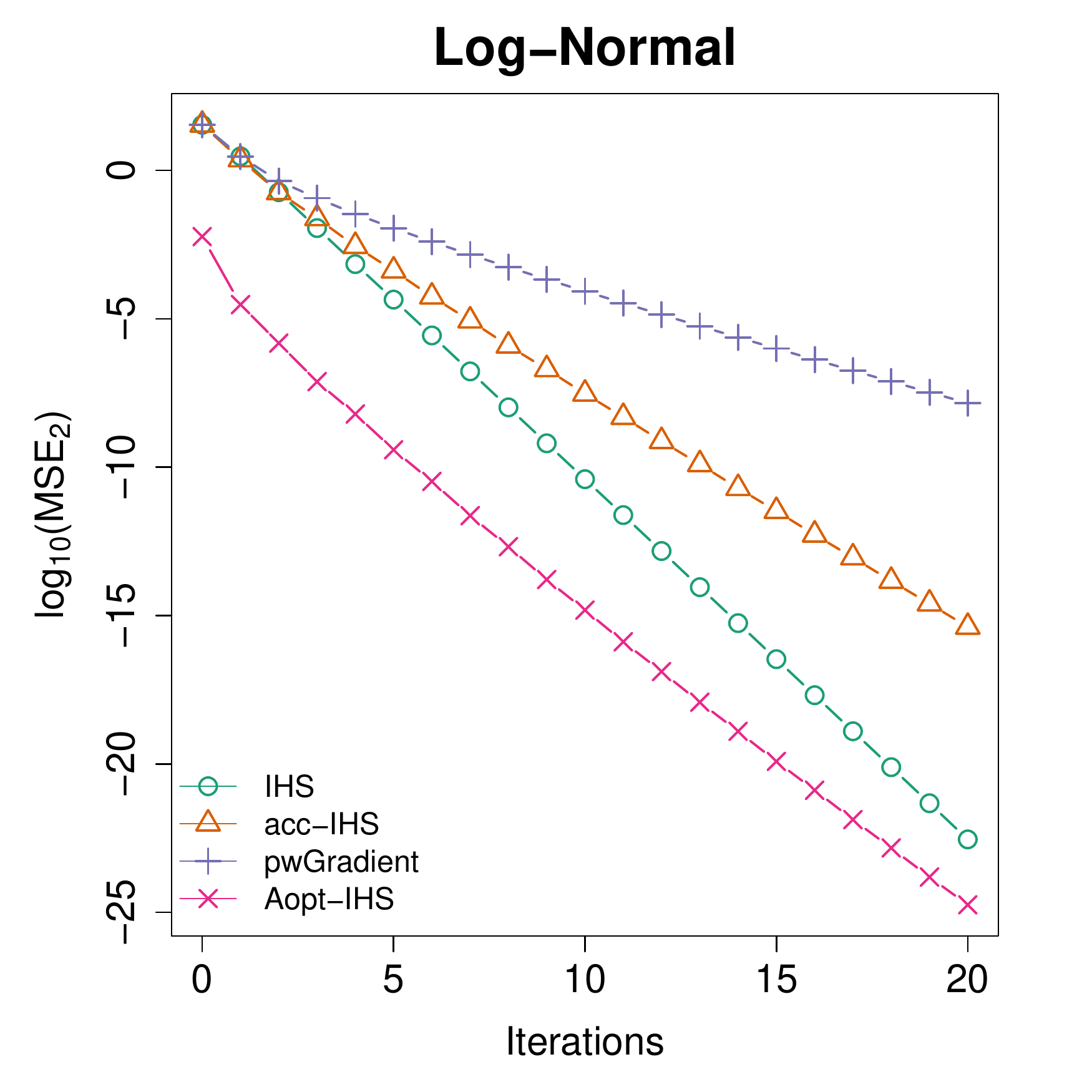}
	\end{minipage}}
	\subfigure[$t_2$]{
		\begin{minipage}[b]{0.4\linewidth}
			\includegraphics[width=\linewidth]{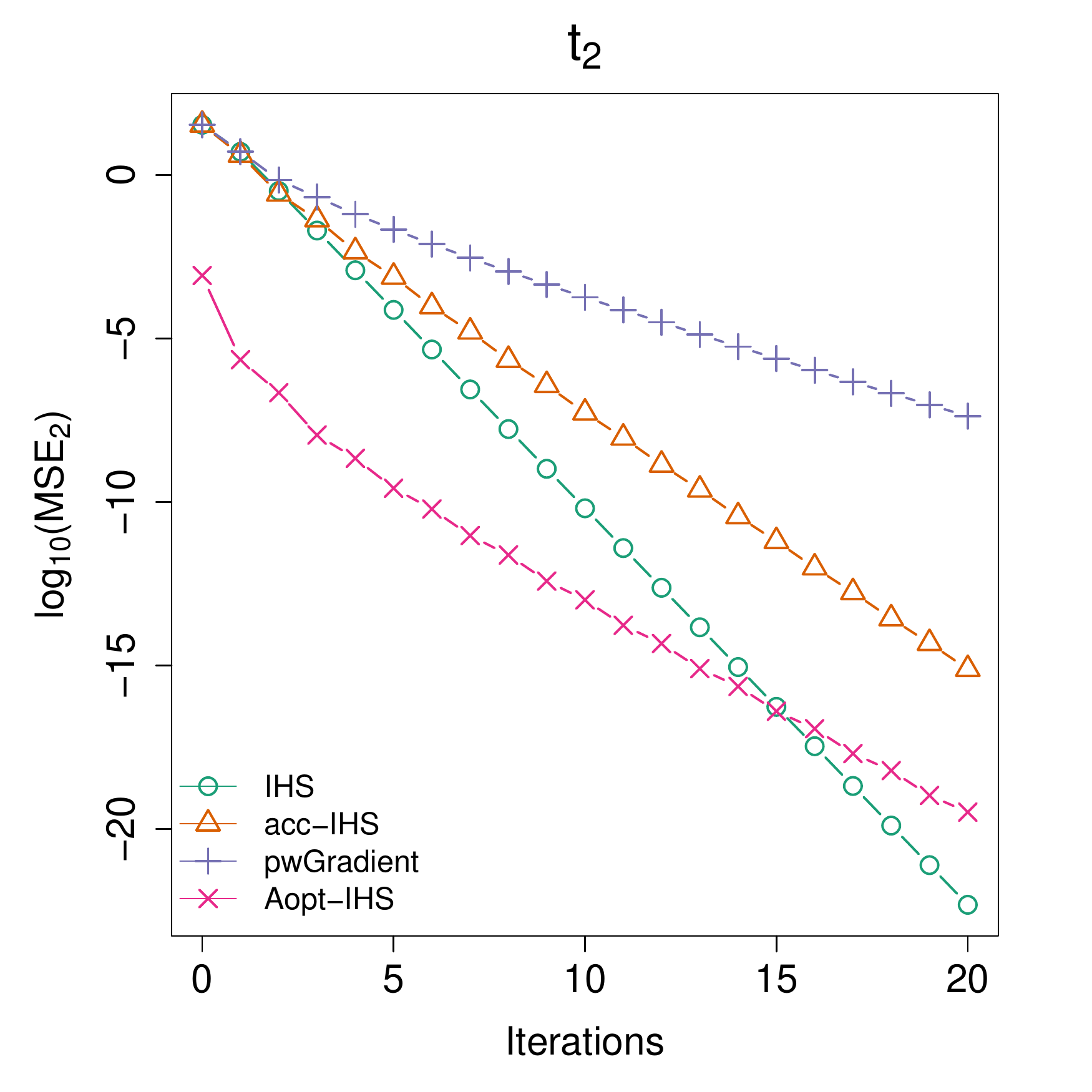}
	\end{minipage}}
	\subfigure[Mixture]{
		\begin{minipage}[b]{0.4\linewidth}
			\includegraphics[width=\linewidth]{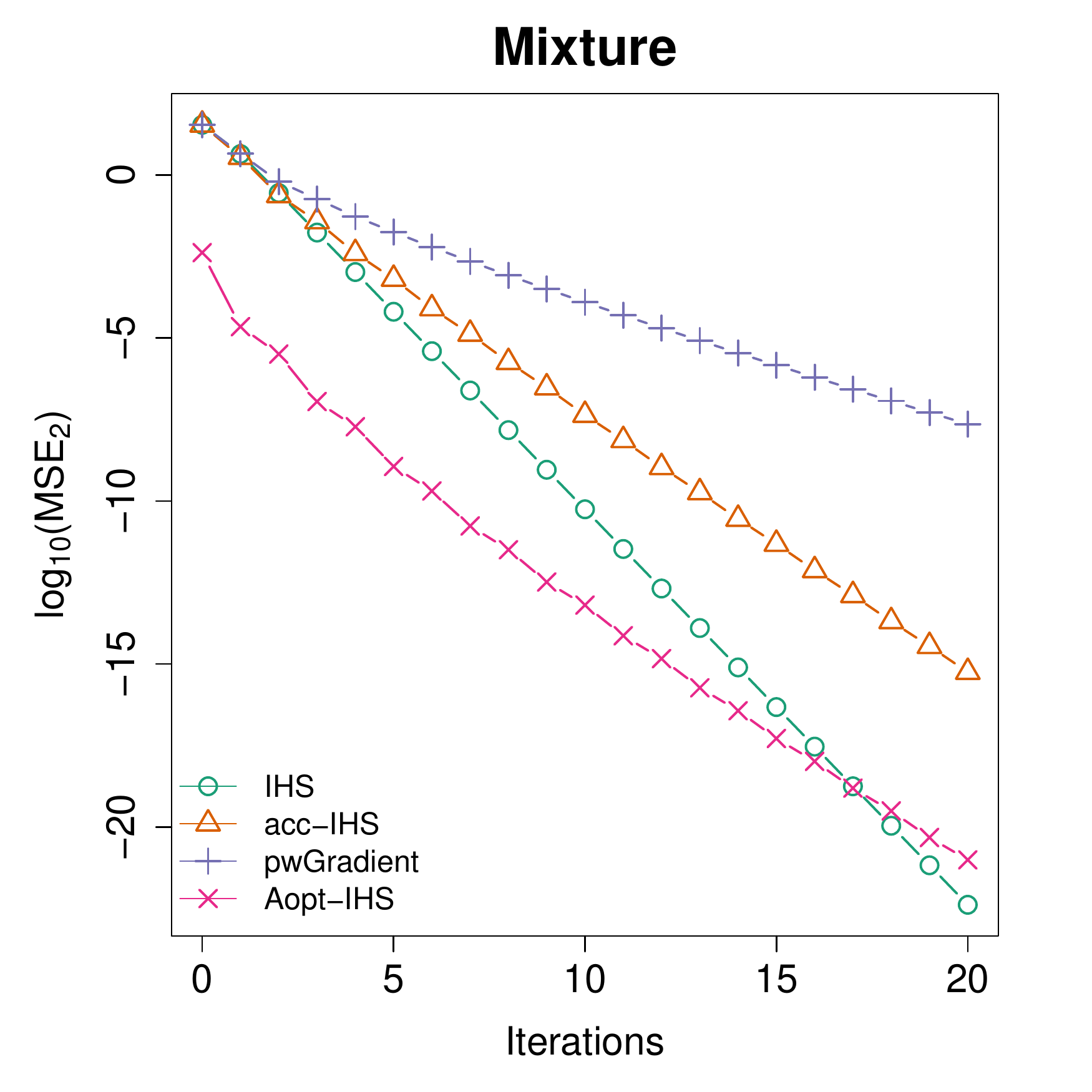}
	\end{minipage}}
	\begin{mycaption}\label{fig-lse}
	Estimation error between the estimator and the LSE based on full data versus iteration number $N$ when $d=50$.
	\end{mycaption}
\end{figure}

Figures \ref{fig-true} and \ref{fig-lse} illustrate the advantages of our proposed Aopt-IHS method. Firstly, the superiority of initialization can be thoroughly reflected by the flat curve in Figure~\ref{fig-true}, as well as the lower starting values in Figure~\ref{fig-lse}. During the iterations, the outperformance of the proposed method can be reflected in Figure \ref{fig-lse}. Specifically, \revise{given the convergent property of $\hat{\bm{\beta}}_t$, the curve slope reflects the convergence rate while the intercept or the starting point measures the effect of the initialization. From the plots, the proposed Aopt-IHS method not only outperforms in the initialization but also demonstrates a decent convergence rate that is competitive to pwGradient and it is even sharper than IHS for the normal case.} 

To compare the computational time, we present in Table~\ref{tb2} the averaged run time for each method for achieving the precision $10^{-10}$ in $\|\hat{\bm{\beta}}-\hat{\bm{\beta}}^{\rm LS}\|_2$, as well as the required numbers of iterations. Note that the time for computing $\bm{X}^T\bm{y}$ in the update formula is not counted for all methods. Table~\ref{tb2} shows that our method can attain the same precision with fewer iterations and shorter time. 

\begin{table}[ht]
\centering
\caption{The averaged time for each approach achieving the precision $10^{-10}$ measured by $\|\hat{\bm{\beta}}-\hat{\bm{\beta}}^{\rm LS}\|_2$, and the required  numbers of iterations.}\label{tb2}
\setlength{\tabcolsep}{3pt} 
\begin{tabular}{{l}*{8}{c}}
			\toprule
			\multirow{4}*{Method} & \multicolumn{2}{c}{Normal} &
			\multicolumn{2}{c}{Log-Normal} &  \multicolumn{2}{c}{$t_2$} &  \multicolumn{2}{c}{Mixture} \\
			\cmidrule(lr){2-3}\cmidrule(lr){4-5}\cmidrule(lr){6-7}\cmidrule(lr){8-9}
			&Time(s)& Iter & Time(s)& Iter& Time(s)& Iter &Time(s)& Iter\\
			\cmidrule(lr){2-9}
			
			&\multicolumn{8}{c}{$d=50$}\\
			IHS        & 15.28 & 18.30   & 14.83 & 18.25   & 15.63 & 18.43   & 15.21 & 18.38   \\
			acc-IHS    & 5.63  & 25.95   & 5.20  & 25.87   & 5.57  & 26.17   & 5.88  & 26.05   \\
			pwGradient & 8.54  & 46.99   & 8.19  & 47.40   & 8.98  & 47.34   & 9.22  & 47.67   \\
			Aopt-IHS   & 3.01  & 10.27   & 3.90  & 14.97   & 3.75  & 12.65   & 4.74  & 17.39   \\
			
			&\multicolumn{8}{c}{$d=100$}\\
		IHS        & 53.55 & 27.29   & 54.89 & 27.16   & 48.17 & 27.51   & 43.31 & 27.40   \\
		acc-IHS    & 19.59 & 40.64   & 19.59 & 40.51   & 17.11 & 41.01   & 15.42 & 40.92   \\
		pwGradient & --- & diverge & --- & diverge & --- & diverge & --- & diverge \\
		Aopt-IHS   & 11.28 & 19.44   & 11.16 & 19.07   & 11.32 & 22.78   & 9.48  & 20.45 \\
		\bottomrule
	\end{tabular}
\end{table}

\revise{
\subsection{Real Data Analysis}
Finally, we apply the proposed method to a real food intake dataset\footnote{Data can be found in \url{https://www.icpsr.umich.edu/icpsrweb/ICPSR/studies/21960#}.}. This data, consisting of three sub-files, records the information on one-day dietary intakes of male residents in the United States with age from 19 to 50 in 1985. We consider the largest food intake file that  includes how much and when the food items are taken. This sub-file has $n=10177$ samples and we are interested in the linear regression model between the food energy ($y$) and the four features: total protein ($x_1$), fat ($x_2$), carbohydrate ($x_3$), and alcohol ($x_4$). Both the feature and target variables are centered from we feed them to the linear model.}

\revise{Using the IHS methods with sketch size as $m=500$ and  $\lambda=0.1\sum_{i=1}^n\|\bm{x}_i\|^2_2$, we evaluate their performances through the Euclidean distance between the estimated parameter  and the full data least square estimator $\|\hat{\bm{\beta}}-\hat{\bm{\beta}}^{\rm LS}\|_2$. Same as the simulation study above, we take the trimmed average of Euclidean errors over $R=1000$ repetitions for the randomized methods. The comparative results are plotted in Figure~\ref{fig-fi} and and it is shown that our proposed Aopt-IHS has the best performance during the beginning iterations. It outperforms both acc-IHS and pwGradient for iterations up to 20. In this real data analysis, the randomized IHS  has the best convergence rate, but computationally it is the least efficient method. 
\begin{figure}[ht!]
\centering
\includegraphics[width=0.6\linewidth]{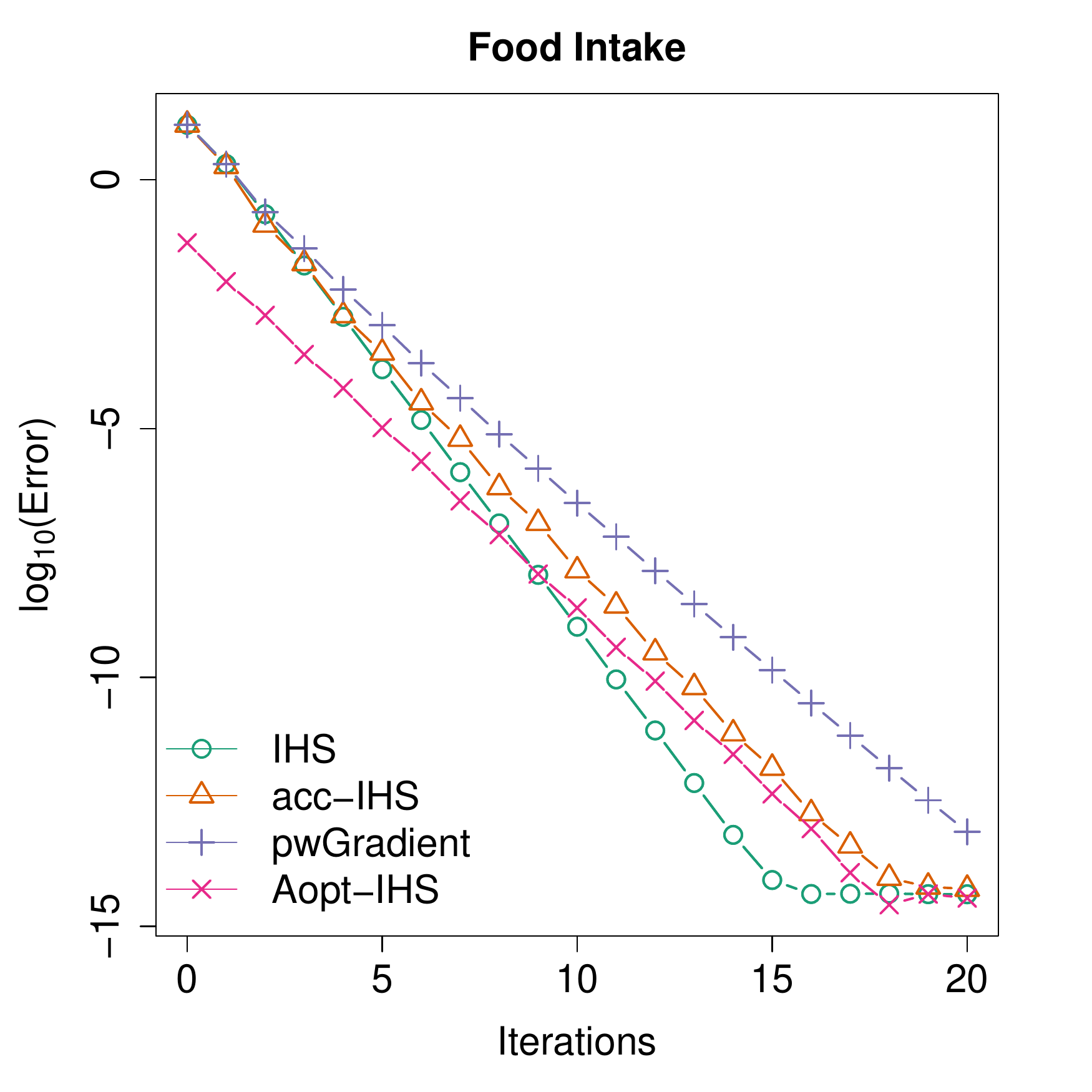}
\begin{mycaption}\label{fig-fi}
The approximation error $\|\hat{\bm{\beta}}-\hat{\bm{\beta}}^{\rm LS}\|_2$ of different IHS methods on the food intake data with sketch size $m=500$.  \end{mycaption}
\end{figure}
}

\section{Conclusion}
We reformulate the IHS method as an adaptive first-order optimization method, by using the idea of optimal design of experiments for subdata selection. To our best knowledge, this is the first attempt to improve the IHS method in a deterministic manner while maintaining a decent speed and precision. The numerical experiments confirm the superiority of the proposed approach.

There are several open problems worth of further investigation, including the theoretical properties of the ridged preconditioner according to the conditions derived in Theorem~\ref{thm1}. \revise{It is of our future interest to investigate how the ridge term affects the convergence rate of Algorithm~3.} Other than using the $A$-optimal design, it is also interesting to investigate the $D$-optimal or other types of optimal designs for the purpose of subdata selection, in particular when the data are heterogeneously distributed.



\bibliographystyle{apalike}       
\bibliography{AdaIHS}   


\newpage
\appendix

\section*{Appendix A: Extra Results with Different Ridged Preconditioners}
We further compare the ridged preconditioner $\bm{M}=\frac{n}{m}\sum_{i=1}^n\delta_i\bm{x}_i\bm{x}_i^T+\lambda\bm{I}_d$ with its two components, the non-ridged term and the scaled identity matrix. Three preconditioners are evaluated through ${\rm MSE}_2$ under our proposed algorithm framework. We only consider the identity matrix $\bm{M}= \bm{I}$ since any scaling multiplier $\lambda$ in $\bm{M}= \lambda\bm{I}$ can be canceled out during the update of $\hat{\bm{\beta}}_t$. The results strengthen that the ridging operation may enhance the preconditioner performance.

\begin{figure}[ht!]
	\centering
	\setcounter{subfigure}{0}
	\subfigure[Normal]{
	\begin{minipage}[b]{0.40\linewidth}
			\centering
			\includegraphics[width=\linewidth]{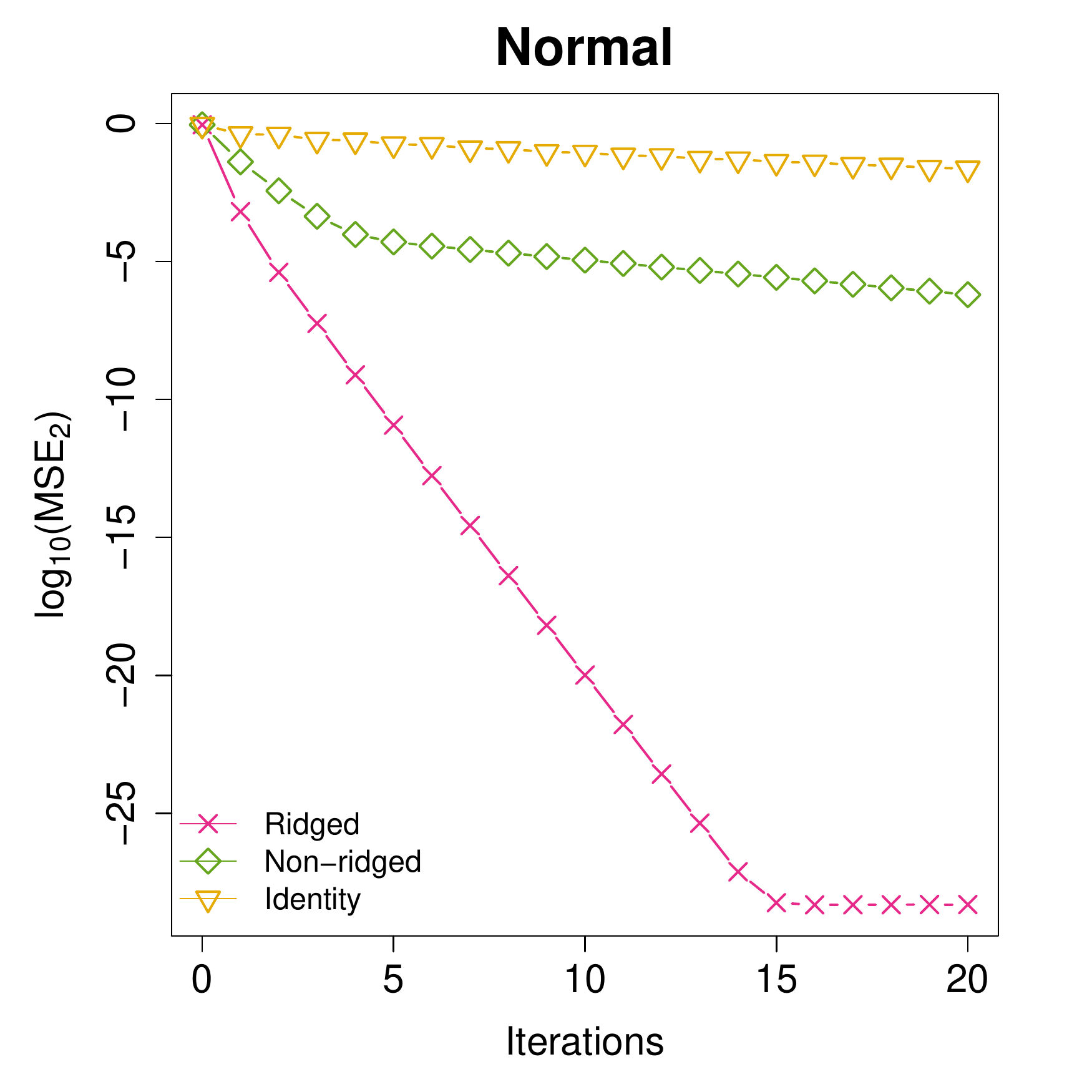}
	\end{minipage}}
	\subfigure[Log-Normal]{
	\begin{minipage}[b]{0.40\linewidth}
			\centering
			\includegraphics[width=\linewidth]{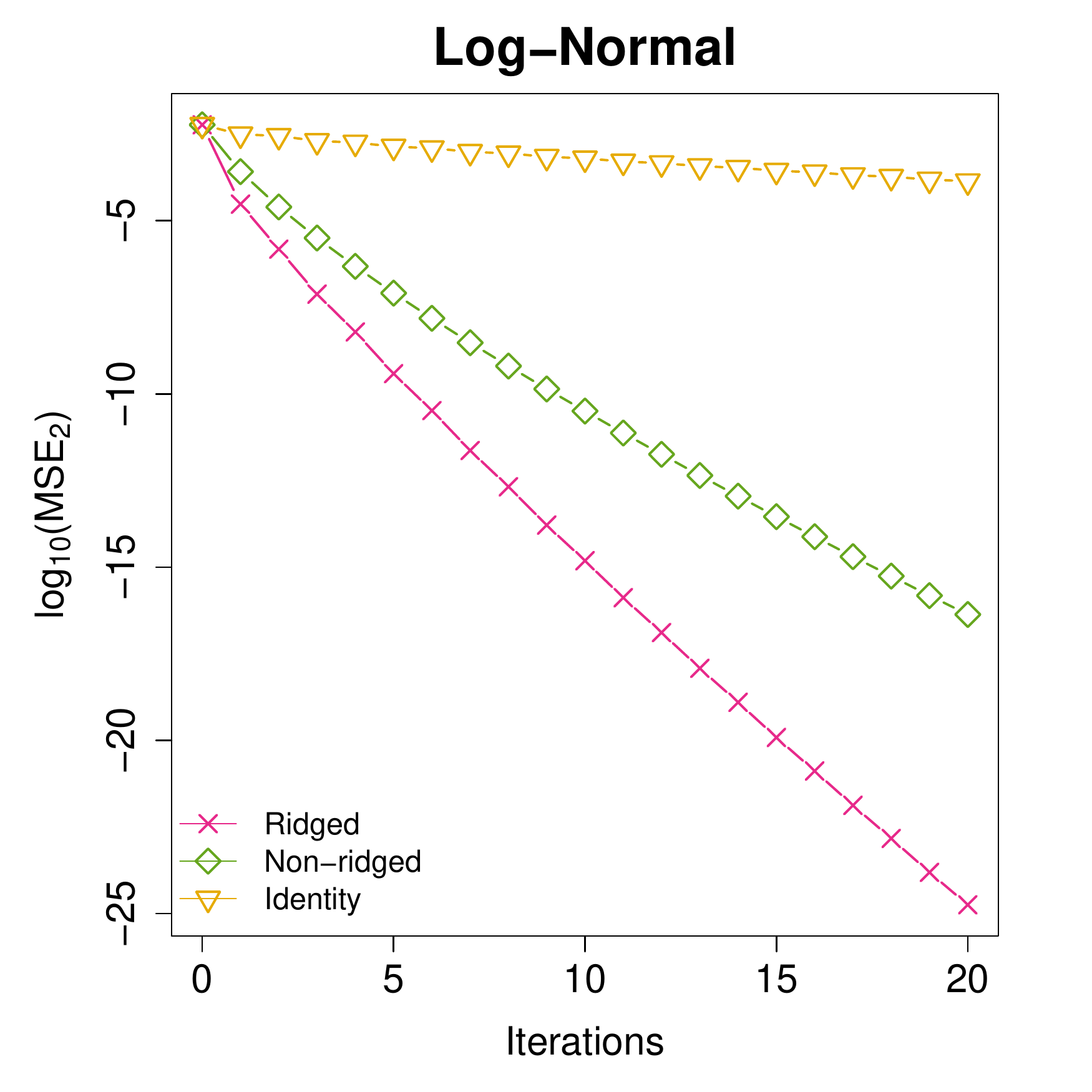}
	\end{minipage}}
	\subfigure[$t_2$]{
		\begin{minipage}[b]{0.40\linewidth}
		\centering
			\includegraphics[width=\linewidth]{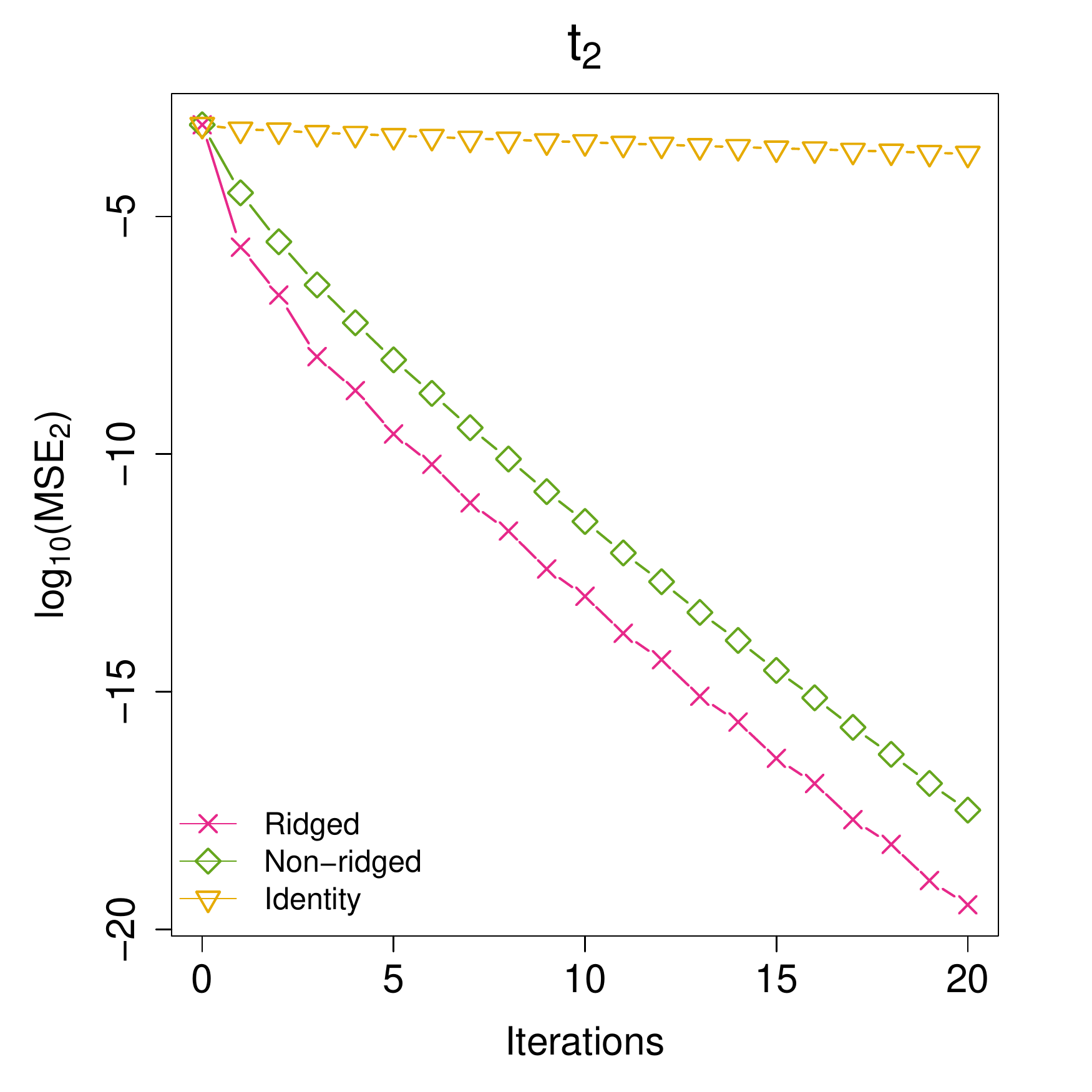}
	\end{minipage}}
	\subfigure[Mixture]{
		\begin{minipage}[b]{0.40\linewidth}
		\centering
			\includegraphics[width=\linewidth]{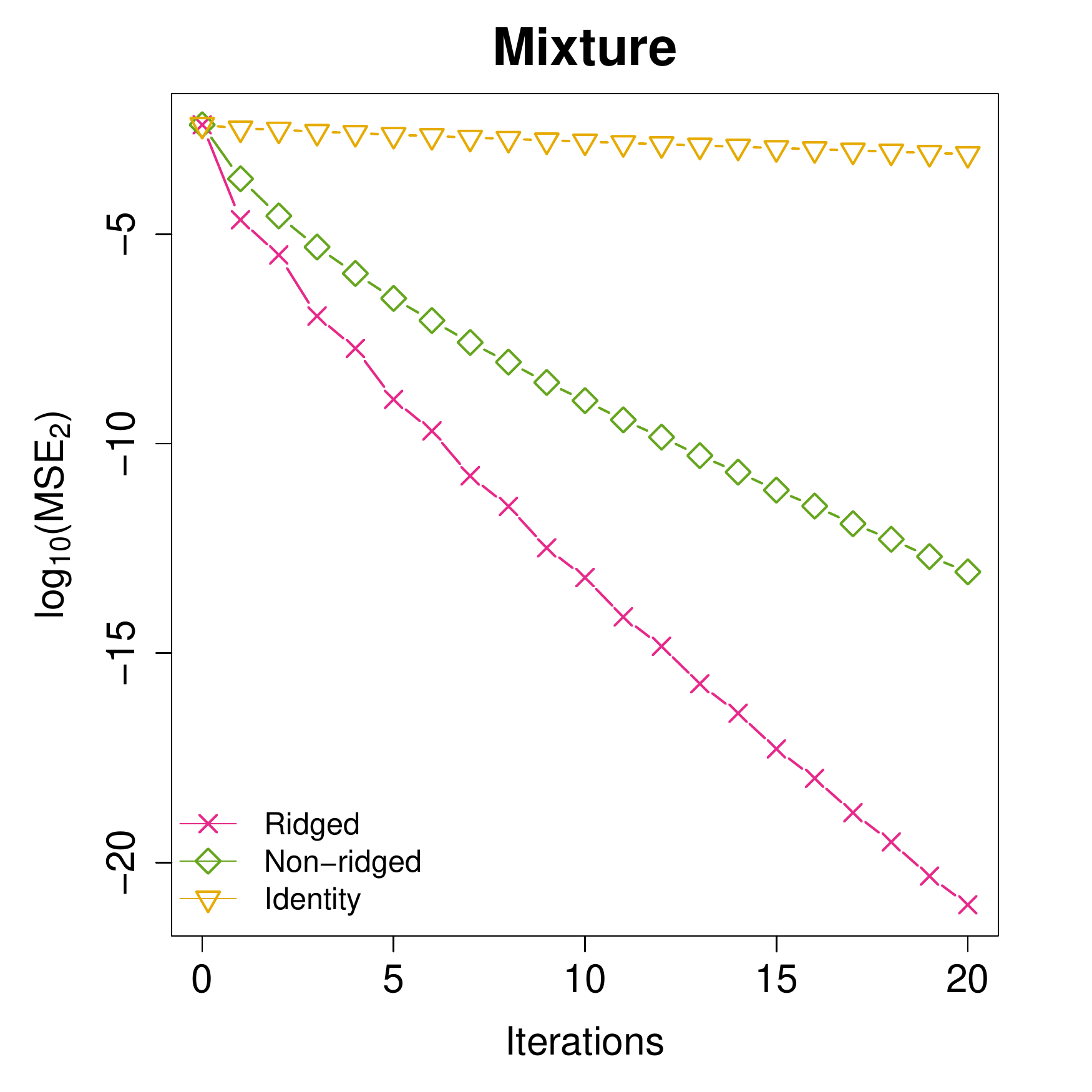}
	\end{minipage}}
	\begin{mycaption}\label{fig-ridge}
	Estimation error between the estimator and the LSE versus iteration number $N$ for different preconditioners. 
	\end{mycaption}
\end{figure}

\newpage
\section*{Appendix B: Extra Results with the Same Proposed Initial Estimator}
\reviseb{In this section, we perform some additional experiments where all the methods are initialized by our proposed $A$-optimal estimator. The subsample size is fixed as $m=1000$. These experiments further justify that the proposed Aopt-IHS method generally enjoys the better convergent rates than the benchmark methods.}

\begin{figure}[ht!]
	\centering
	\setcounter{subfigure}{0}
	\subfigure[Normal]{
	\begin{minipage}[b]{0.4\linewidth}
			\includegraphics[width=\linewidth]{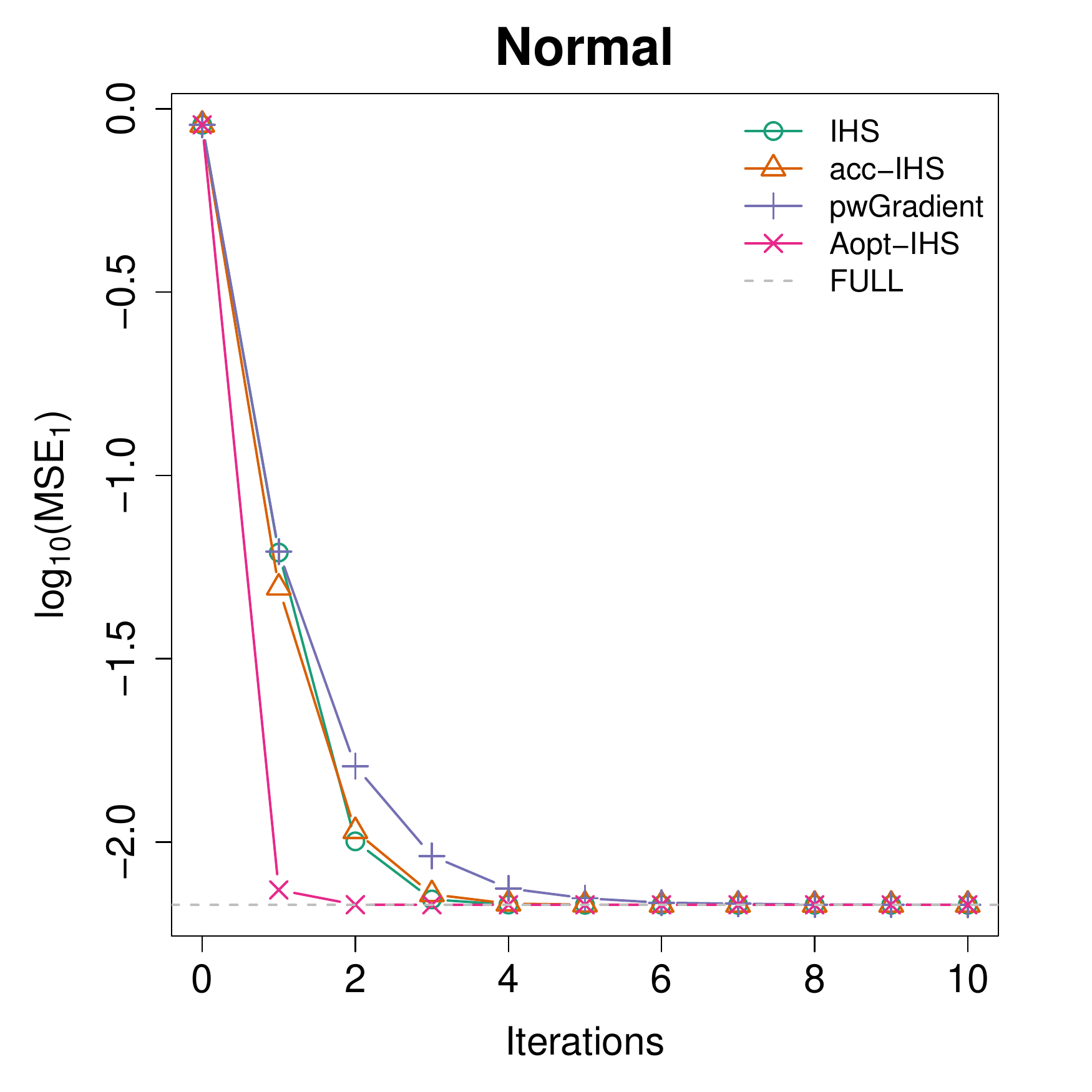}
	\end{minipage}}
	\subfigure[Log-Normal]{
		\begin{minipage}[b]{0.4\linewidth}
			\includegraphics[width=\linewidth]{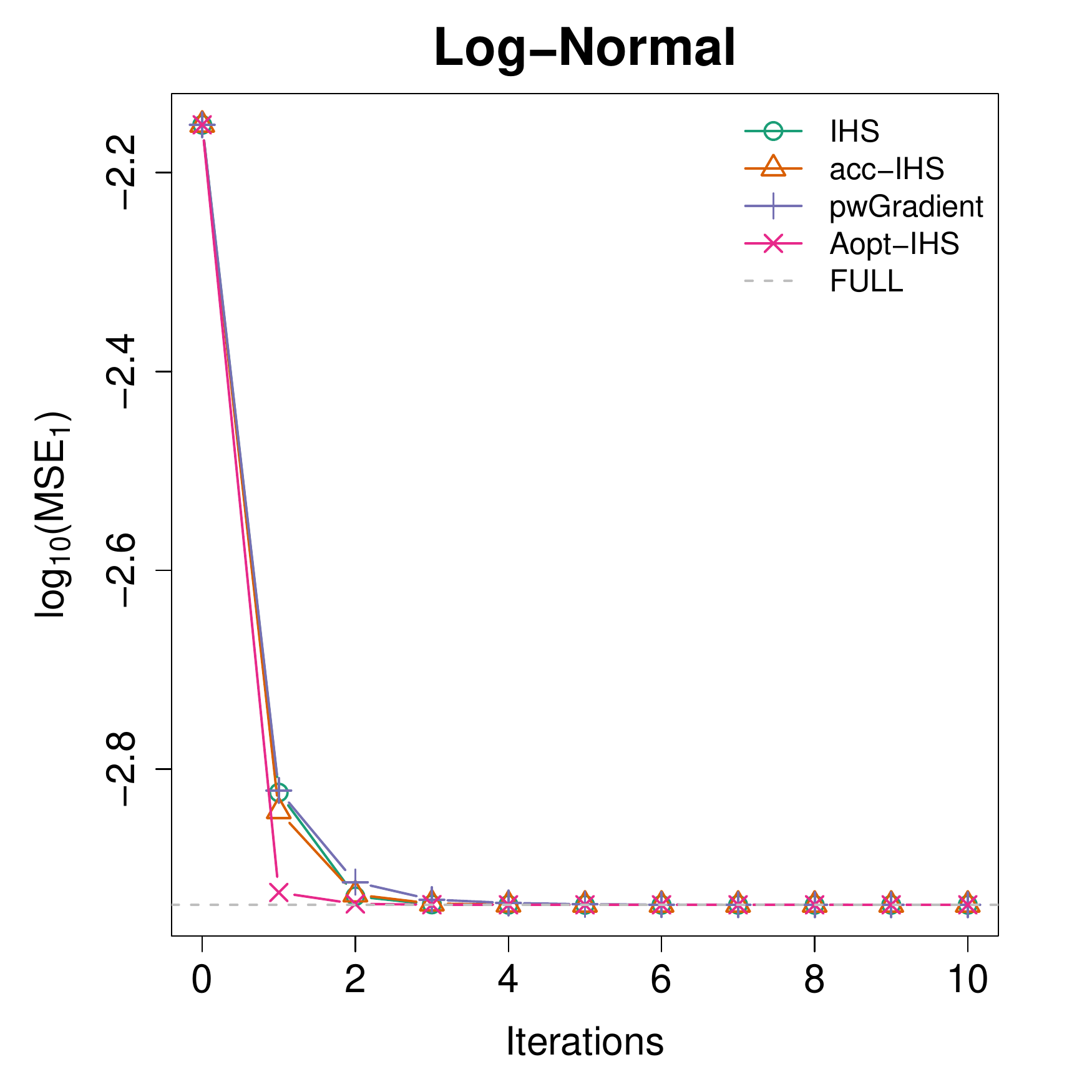}
	\end{minipage}}
	\subfigure[$t_2$]{
		\begin{minipage}[b]{0.4\linewidth}
			\includegraphics[width=\linewidth]{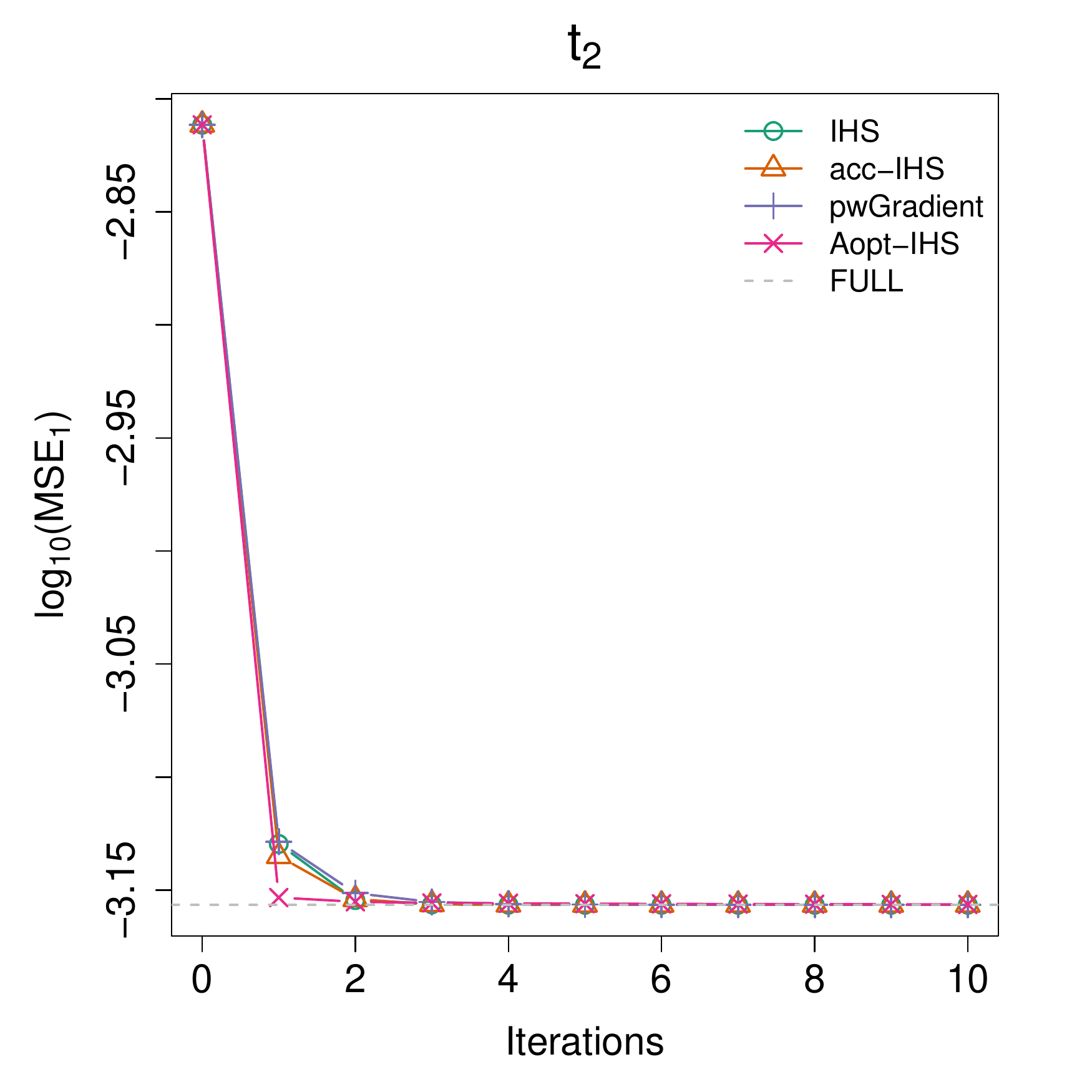}
	\end{minipage}}
	\subfigure[Mixture]{
		\begin{minipage}[b]{0.4\linewidth}
			\includegraphics[width=\linewidth]{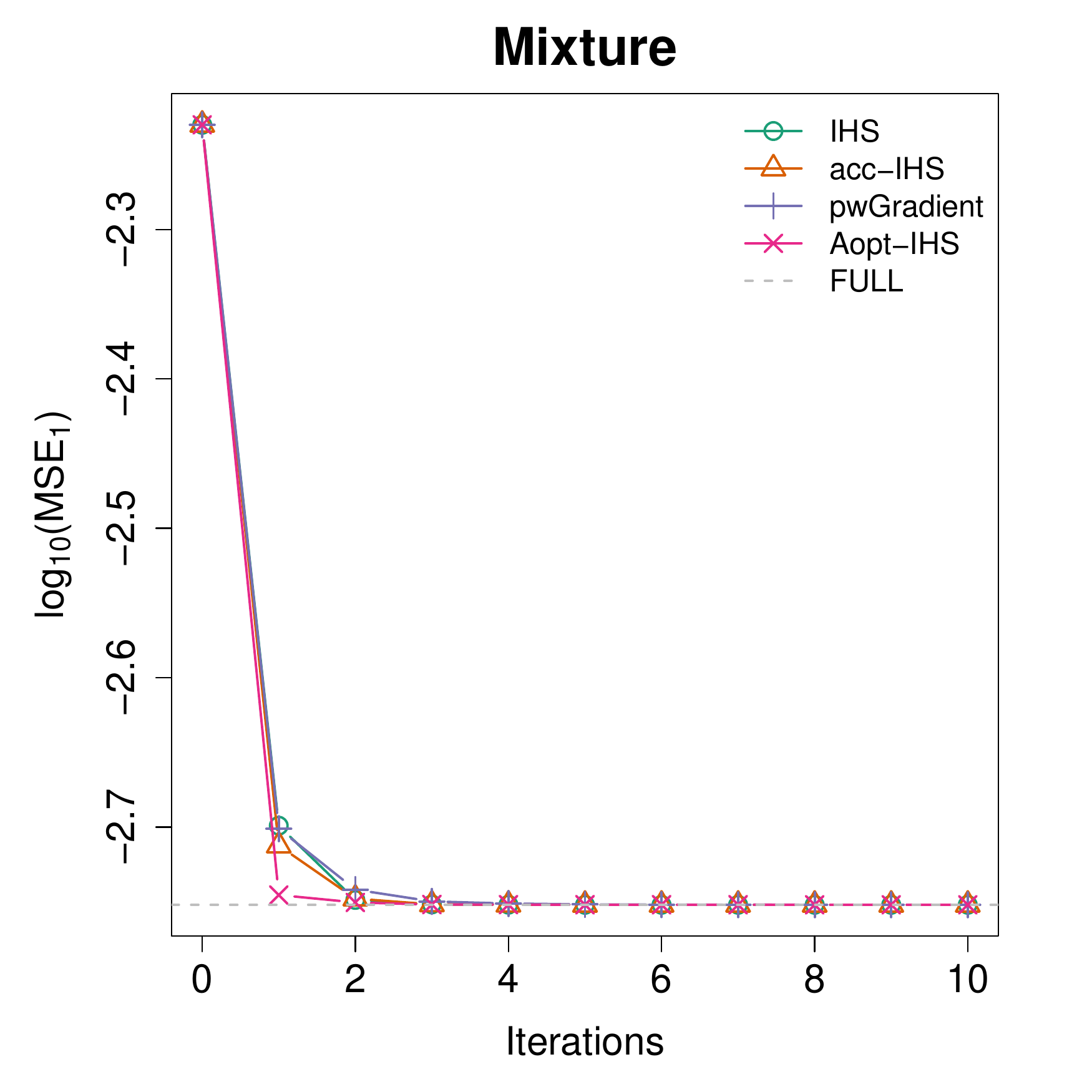}
	\end{minipage}}
	\begin{mycaption}
		Estimation error between the estimator and the ground truth versus iteration number $N$ when $d=50$.
	\end{mycaption}
\end{figure}

\begin{figure}[ht!]
	\centering
	\setcounter{subfigure}{0}
	\subfigure[Normal]{
	\begin{minipage}[b]{0.4\linewidth}
			\includegraphics[width=\linewidth]{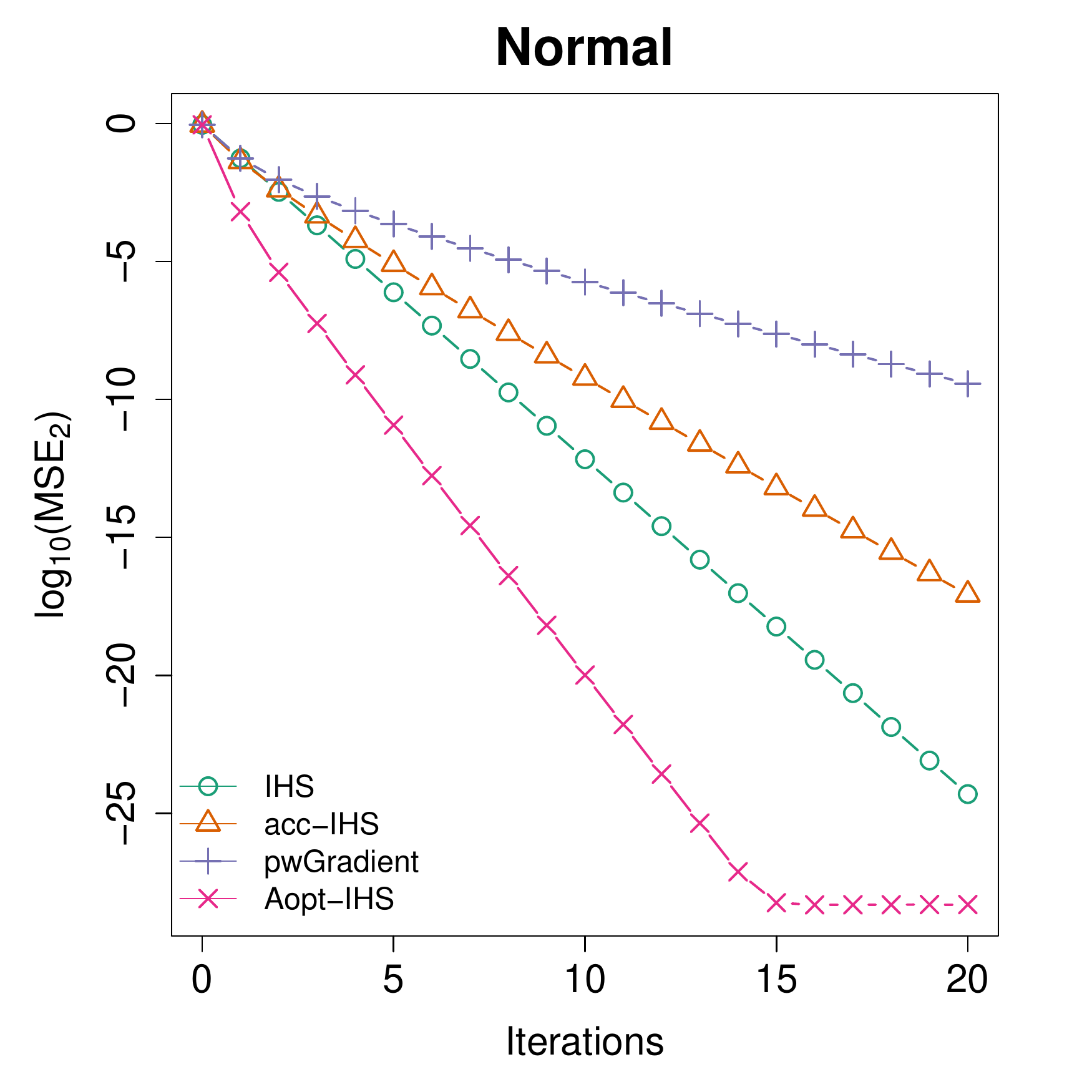}
	\end{minipage}}
	\subfigure[Log-Normal]{
		\begin{minipage}[b]{0.4\linewidth}
			\includegraphics[width=\linewidth]{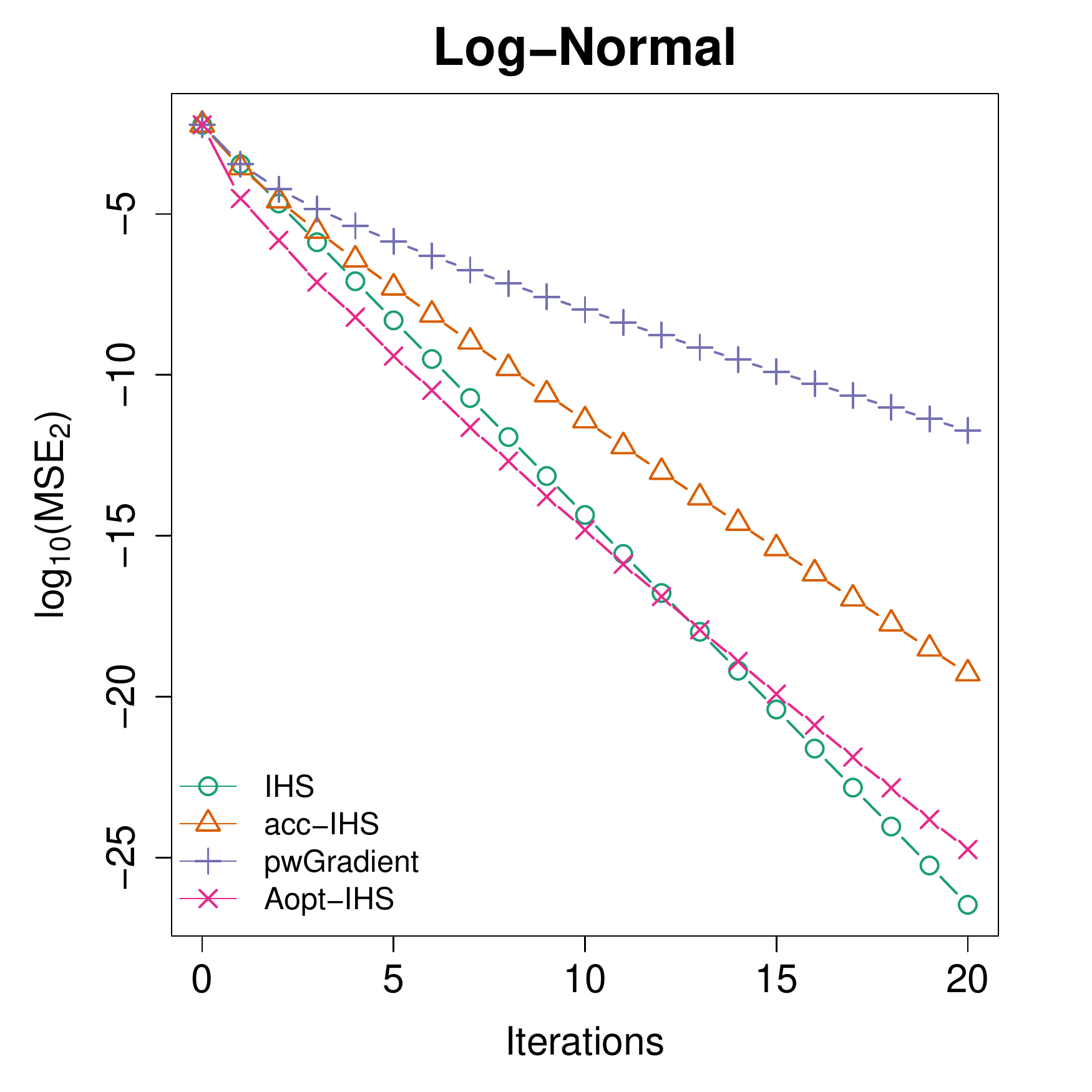}
	\end{minipage}}
	\subfigure[$t_2$]{
		\begin{minipage}[b]{0.4\linewidth}
			\includegraphics[width=\linewidth]{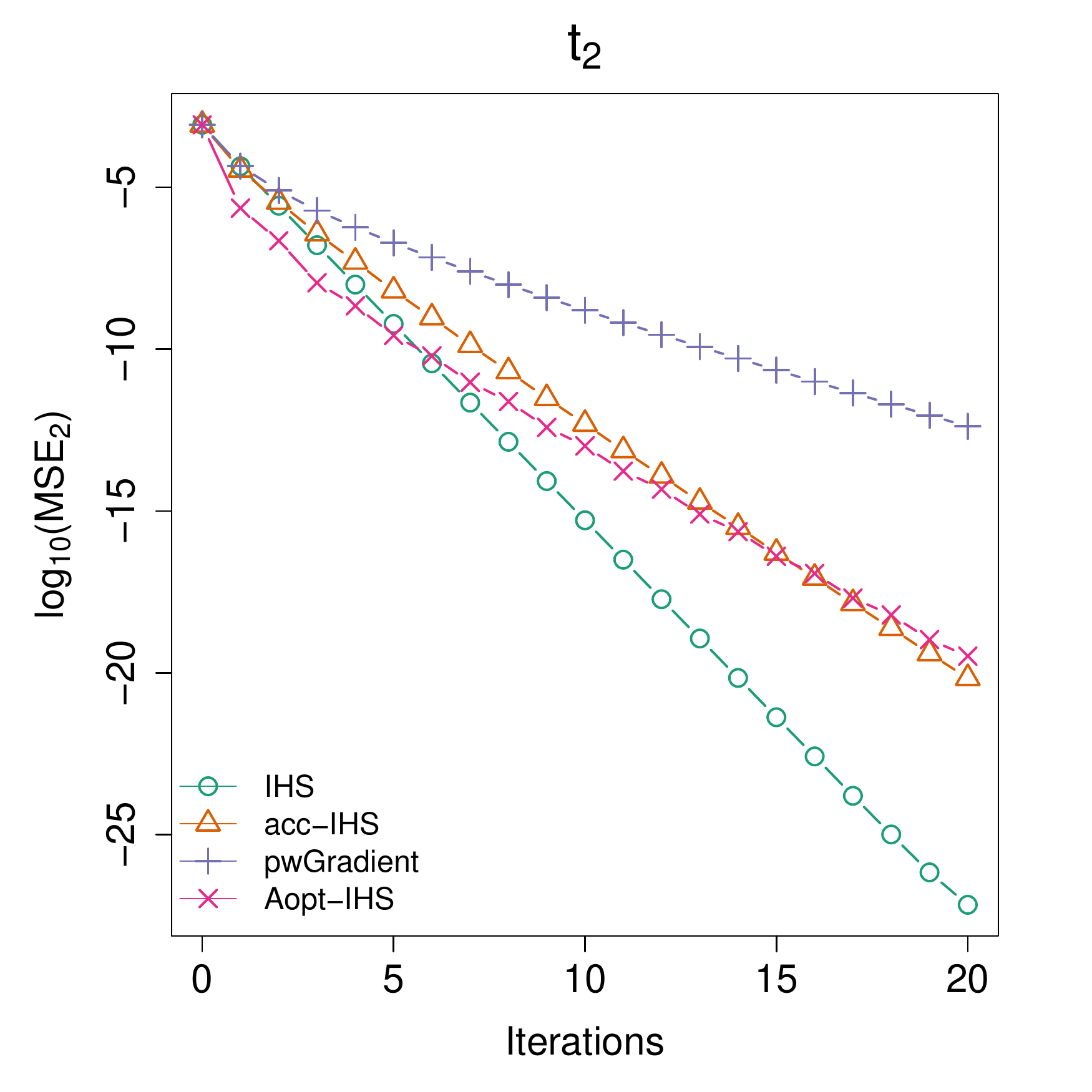}
	\end{minipage}}
	\subfigure[Mixture]{
		\begin{minipage}[b]{0.4\linewidth}
			\includegraphics[width=\linewidth]{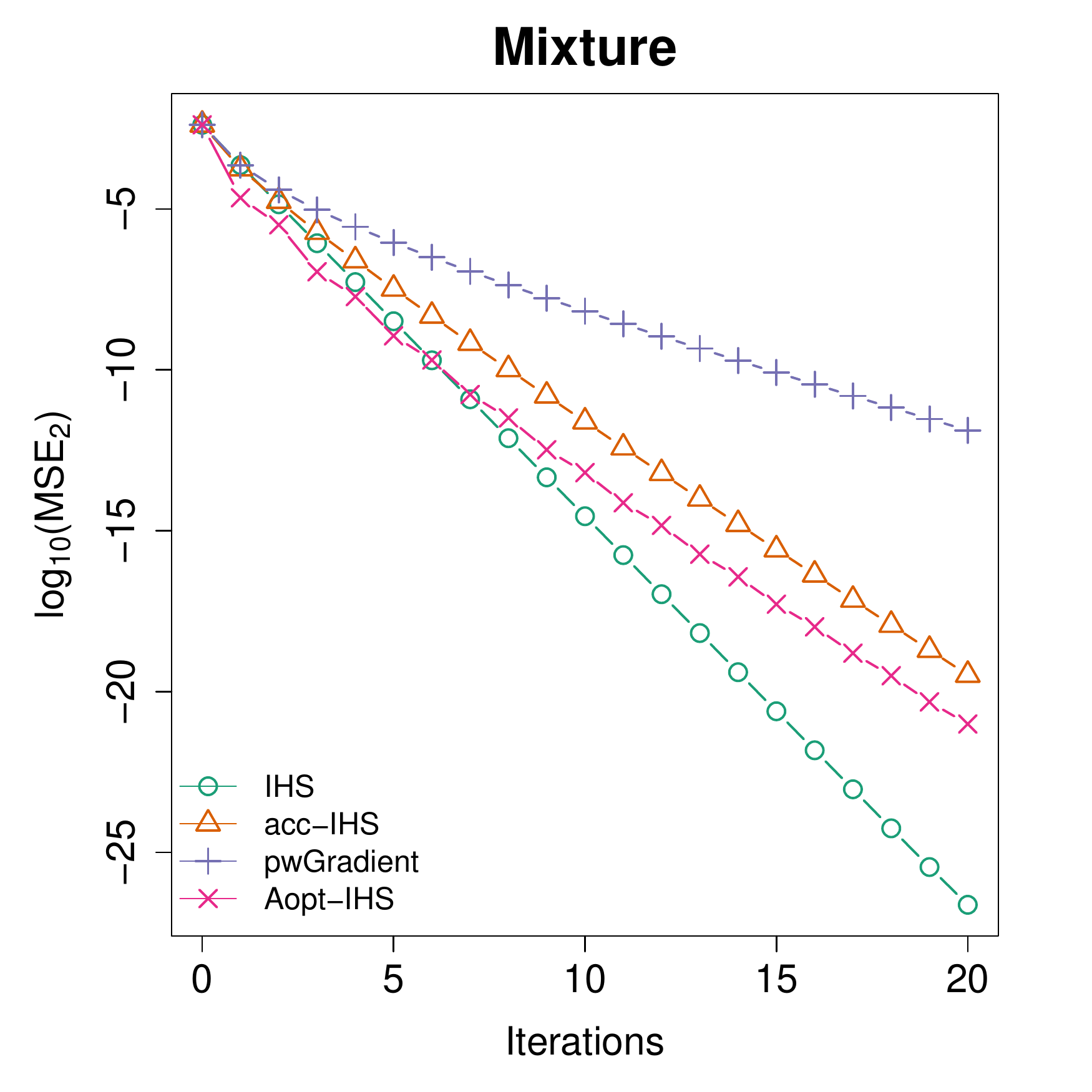}
	\end{minipage}}
	\begin{mycaption}
		Estimation error between the estimator and the LSE based on full data versus iteration number $N$ when $d=50$.
	\end{mycaption}
\end{figure}
 
\end{document}